\newtheorem{theorem}{Theorem}
\newtheorem{proposition}{Proposition}
\theoremstyle{definition}
\newtheorem{assumption}{Assumption}
\newtheorem{definition}{Definition}
\title{\bf Nonlinear Distributional Gradient Temporal-Difference Learning    }
\date{}
\author[1]{Chao Qu}
\author[1]{Shie Mannor}
\author[2]{Huan Xu}
\affil[1]{ Faculty of Electrical Engineering,  Technion}
\affil[2]{H. Milton Stewart School of Industrial and Systems Engineering, Georgia Institute of Technology}
\begin{document}

\maketitle

\begin{abstract}
	We devise a distributional  variant of gradient temporal-difference (TD) learning.   Distributional reinforcement learning has been demonstrated to outperform the regular one in the recent study \citep{bellemare2017distributional}. In the policy evaluation setting, we  design two new algorithms called  distributional GTD2  and distributional TDC  using the Cram{\'e}r distance on the distributional version of the Bellman error objective function, which inherits  advantages of both the nonlinear gradient TD algorithms and the distributional RL approach. In the control setting, we propose the distributional Greedy-GQ using  the similar derivation. We prove the asymptotic almost-sure convergence of distributional GTD2 and TDC to a local optimal solution for   general smooth function approximators, which includes neural networks that have been widely used in recent study to solve the real-life RL problems. In each step, the computational complexities of above three algorithms are linear w.r.t.\ the number of the parameters of the function approximator, thus can be implemented efficiently for neural networks.
\end{abstract}

\section{Introduction}

Reinforcement learning (RL) considers a problem where an agent interacts with the environment to maximize the cumulative reward trough time. A standard approach   to solve the RL problem is called value function based reinforcement learning, which finds a policy that maximizes the value function $V(s)$ \citep{sutton1998reinforcement}.   Thus, the  estimation of  the value function of a given stationary policy of a Markov Decision Process (MDP) is an important subroutine of {\em generalized policy iteration} \citep{sutton1998reinforcement} and a key intermediate step to generate good control policy \citep{gelly2008achieving,tesauro1992practical}.  The value function is known to solve the Bellman equation, which succinctly describes the recursive relation on state-action value function $Q(s,a)$.
$$ Q^{\pi}(s,a)= \mathbb{E} {R(s,a)} + \gamma\mathbb{E}_{s',a'} Q^{\pi}(s',a'), $$
where the expectation is taken over the next state $s'\sim P(\cdot| s,a)$, the reward $R(s,a)$ and the action $a'$ from policy $\pi$, $\gamma$ is the discount factor. Hence, many RL algorithms are based on the idea of solving the above Bellman equation in a sample driven way, and one popular technique is the temporal-difference (TD) learning \citep{sutton1998reinforcement}. 

The last several years have witnessed the success of the TD learning  with the value function approximation \citep{mnih2015human,van2016deep}, especially when using a deep neural network. In their seminal work, \citet{tsitsiklis1996analysis} proved that the TD($\lambda$) algorithm converges when a {\em linear function approximator} is implemented and  states are sampled according to the policy evaluated 
(sometimes referred as {\em on-policy setting} in RL literature).  However, if either the  function approximator is non-linear, or the on-policy setting does not hold, there are counterexamples that demonstrates that TD($\lambda$) may diverge. To mitigate this problem, a family of TD-style algorithms called Gradient Temporal Difference (GTD) are proposed by \citep{sutton2009convergent,sutton2009fast} that address the instability  of the TD algorithm with the linear function approximator in the off-policy setting.  These works rely on the objective function called mean-squared projected Bellman error (MSPBE) whose unique optimum are the fixed points of the TD(0) algorithm.  \citet{bhatnagar2009convergent} extend this idea to the non-linear smooth function approximator (e.g.,  neural networks) and prove the convergence of the algorithm under mild conditions. In the control setting, \citet{maei2010toward} propose Greedy-GQ which has similar objective function as MSPBE but w.r.t. the Bellman optimality operator.

Recently, the distributional perspective on reinforcement learning has gained much attention. Rather than study on the expectation of the long term return (i.e., $Q(s,a)$), it explicitly takes into consideration the stochastic nature of the  long term return $Z(s,a)$ (whose expectation is $Q(s,a)$).  The recursion of $Z(s,a)$ is described by the distributional Bellman equation as follows,
\begin{equation}\label{distributional_Bellman}
Z(s,a)\overset{D}{=} R(s,a)+\gamma Z(s',a'),
\end{equation}
where $\overset{D}{=}$ stands for ``equal in distribution'' (see Section~\ref{sec.setting}   for more detailed explanations). The distributional Bellman equation essentially asserts that the distribution of $Z$ is characterized by  the reward $R$, the next random state-action $(s',a')$ following policy $\pi$ and its random return $Z(s',a')$. Following the notion in \citep{bellemare2017distributional} we call $Z$ the {\em value distribution}.
\citet{bellemare2017distributional}  showed that for a fixed policy the Bellman operator over value distributions is a $\gamma$-contraction in {\em a maximal form of the Wasserstein metric}, thus making it possible to learn the value distribution in a sample driven way. There are several advantages to study the value distributions: First, real-life decision makers sometimes are interested in seeking big wins on rare occasions or avoiding a small chance of suffering a large loss. For example, in financial engineering, this risk-sensitive scenario is one of the central topics. Because of this, risk-sensitive RL has been an active research field in RL \citep{heger1994consideration,defourny2008risk,bagnell2003covariant,tamar2016learning}, and the value distribution obviously provides a very useful tool in designing risk-sensitive RL algorithms. Second, it can model the uncertainty. \citet{engel2005reinforcement} leveraged the distributional Bellman equation to define a Gaussian process over the unknown value function. Third, from the algorithmic view, as discussed in \citep{bellemare2017distributional}, the distributional Bellman operator preserves multimodality in value distribution, which leads to more stable learning. From the  exploration-exploitation tradeoff perspective, if the value distribution is known, the agent can explore the region with high uncertainty, which is often called ``optimism in the face of uncertainty'' \citep{kearns2002near,o2017uncertainty}.

\mbox{\textbf{Contributions}:} Although  distributional approaches on RL (e.g., C51 in \cite{bellemare2017distributional}) have shown promising results,   theoretical properties of them are not well understood yet,  especially when the function approximator is nonlinear. As  nonlinear function approximation is inevitable if we hope to combine RL with  deep neural networks~--~a paradigm with tremendous recent success that enables automatic feature engineering and end-to-end learning to solve the real problem.  Therefore, to extend the applicability of the distributional approach to the real problem and close the gap between the theory and practical algorithms, we propose the nonlinear distributional gradient temporal-difference learning.  It inherits the merits of non-linear gradient TD and distributional approaches  mentioned above. Using the similar heuristic, we also propose a distributional control algorithm called distributional Greedy-GQ.


The contributions of this paper are the following.
\begin{itemize}
	\item  We propose a {\em distributional MSPBE (D-MSPBE) } as the objective function to optimize, which is an extension of MSPBE when the stochastic nature of the random return is considered. 
	\item We derive two stochastic gradient algorithms to optimize the D-MSPBE using the weight-duplication trick in \citep{sutton2009convergent,bhatnagar2009convergent}. In each step, the computational complexity is linear w.r.t.\ the number of parameters of the function approximator, thus can be efficiently implemented for neural networks. 	
	\item We propose a distributional RL algorithm in the control setting called distributional Greedy-GQ, which is an distributional counterpart of \cite{maei2010toward}.
	\item We prove distributional GTD2 and TDC converge to a local optimal solution in the policy evaluation setting under mild conditions using the two-time-scale stochastic approximation argument. If the linear function approximator is applied we have the finite sample bound.

\end{itemize}
Remarks: More precisely, we have $m$ addition operations in each step of algorithm but the costs of them are negligible compared to computations in neural networks in general. Thus  the computational complexity in each step is still linear w.r.t.\ the number of parameters of the function approximator (neural networks).

\section{Problem setting and preliminaries}\label{sec.setting}  

We consider a standard setting in the reinforcement learning, where an agent acts in a stochastic environment by sequentially choosing actions over a sequence of time steps, in order to maximize the cumulative reward \citep{sutton1998reinforcement}. This problem is formulated as a Markov Decision Process (MDP) which is a 5-tuple ($\mathcal{S}, \mathcal{A}, \mathcal{R}, \mathcal{P}, \gamma$): $\mathcal{S}$ is the finite state space, $\mathcal{A}$ is the finite action space, $\mathcal{P}=(P(s'|s,a))_{s,s'\in \mathcal{S},a\in \mathcal{A}}$ are the transition probabilities, $R=(R(s,a))_{s,s'\in \mathcal{S},a\in \mathcal{A}}$ are the real-valued immediate rewards and $\gamma\in (0,1)$ is the discount factor.  A policy is used to select actions in the MDP. In general, the policy is stochastic and denoted by $\pi$, where $\pi(s_t,a_t)$ is the conditional probability density at $a_t$ associated with the policy.  We also define $R^{\pi} (s)=\sum_{a\in \mathcal{A}}\pi(s,a) R(s,a)$, $P^{\pi}(s,s')=\sum_{a\in \mathcal{A}} \pi(s,a) P(s'|s,a)$.

Suppose the policy $\pi$ to be evaluated is followed and it generates a trajectory $(s_0,a_0,r_1,s_1,a_1,r_2,s_2,...)$. We are given an infinite sequence of 3-tuples $(s_t,r_t,s'_t)$ that satisfies the following assumption.
\begin{assumption}\label{Assumption1}
	$ (s_t)_{t\geq 0}$ is an $\mathcal{S}$-valued stationary Markov process, $s_t\sim d(\cdot)$, $r_t=R^{\pi}(s_t)$ and $s'_t\sim P^{\pi}(s_t,\cdot)$.
\end{assumption}

Here $d(\cdot)$ denotes the probability distribution over initial states for a transition. Since stationarity is assumed, we can drop the index $t$ in the $t^{th}$ transition $ (s_t,r_t,s'_t)$ and use $(s,r,s')$ to denote a random transition.  The (state-action) value function $Q^{\pi}$ of a policy $\pi$ describes the expected return from taking action $a\in \mathcal{A}$ from state $s\in \mathcal{S}$.
$$ Q^{\pi} (s,a):= \mathbb{E} {\sum_{t=0}^{\infty} \gamma^t R(s_t,a_t) },$$
$ s_t\sim P(\cdot|s_{t-1},a_{t-1}), a_{t}\sim \pi(\cdot|s_t), s_0=s, a_0=a.  $
It is well known that the value function $Q^{\pi}(s,a)$ satisfies the Bellman equation.
$ Q^{\pi}(s,a)=\mathbb{E}R(s,a)+\gamma \mathbb{E} Q^{\pi} (s',a')$. Define the Bellman operator as $(\mathcal{T} Q^{\pi})(s,a):= \mathbb{E}R(s,a)+\gamma \mathbb{E}Q^{\pi}(s',a'),$ then the Bellman equation becomes $Q^{\pi}=\mathcal{T} Q^{\pi}$.  To lighten the notation, from now on we may drop the superscript $\pi$  when the policy to be evaluated is kept fixed.

\subsection{Distributional Bellman equation and Cram{\'e}r distance}
Recall that the return $Z$ is the sum of discounted reward along the agent's trajectory of interactions with the environment, and hence $Q (s,a)=\mathbb{E} Z (s,a)$. When the stochastic nature of the return $Z$ is considered, we need a distributional variant of the Bellman equation which the {\em distribution of $Z$} satisfies.   Following the notion in \citep{bellemare2017distributional}, we define the transition operator $P^{\pi}$:
$$P^{\pi} Z(s,a):\overset{D}{=}Z(s',a'), ~~~ s'\sim P(\cdot|s,a), a'\sim \pi(\cdot|s'),$$
where $A:\overset{D}{=} B$ indicates that the random variable $A$ is distributed according to the same law of $B$.
The distributional Bellman operator $\mathcal{T^{\pi}}$ is 
$ \mathcal{T}^{\pi} Z(s,a):\overset{D}{=} R(s,a)+\gamma P^{\pi} Z(s,a).$ For more rigorous definition and discussions on this operator, we refer reader to \citep{bellemare2017distributional}.

\citet{bellemare2017distributional} prove that the distributional Bellman operator is a $\gamma$-contraction in a maximal form of the Wasserstein metric. However as pointed by them (see proposition 5 in their paper), in practice, it is hard to estimate the Wasserstein distance using samples and furthermore the gradient estimation w.r.t.\ the parameter of the function approximator is biased in general. Thus KL divergence is implemented instead in the algorithm C51 rather than the Wasserstein metric. However the KL divergence may not be robust to the discrepancies in support of distribution \citep{arjovsky2017wasserstein}. In this paper, we adapt the Cram{\'e}r distance \citep{szekely2003statistics,bellemare2017cramer} instead, since the unbiased sample gradient estimaton of Cram{\'e}r distance can be easily obtained in the setting of reinforcement learning \citep{bellemare2017cramer}.
The square root of Cram{\'e}r distance is defined as follows: Suppose there are two distributions $P$ and $Q$ and their cumulative distribution functions are $F_{P}$ and $F_{Q}$ respectively, then the square root of Cram{\'e}r distance between $P$ and $Q$ is 
$$ \ell_2(P,Q):=\big(\int_{-\infty}^{\infty} (F_P(x)-F_Q(x))^2 dx \big)^{1/2}.$$
Intuitively, it can be thought as the two norm on the distribution function. Indeed, the distributional Bellman operator is a $\sqrt{\gamma}$-contraction in a maximal form of the square root of Cram{\'e}r distance. Here, for two random return $Z_1,Z_2 $ with distribution $P_{Z_1}$ and $P_{Z_2}$, a maximal form of the square root of Cram{\'e}r distance is defined as  $\bar{\ell}_2 (P_{Z_1}, P_{Z_2})=\sup_{s,a} \ell_2 (P_{Z_1}(s,a), P_{Z_2}(s,a)).$ 

\begin{proposition}\label{proposition}
	$\bar{\ell}_2$ is a metric over value distributions and $\mathcal{T}^{\pi}$ is a $\sqrt{\gamma}-$ contraction in $\bar{\ell}_2$.
\end{proposition}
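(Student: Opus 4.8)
The plan is to establish the two claims separately, handling the metric axioms first and then the contraction. The key structural observation, used throughout, is that $\ell_2$ is nothing but the $L^2(\mathbb{R})$ norm of the difference of cumulative distribution functions: writing $\ell_2(P,Q)=\|F_P-F_Q\|_{L^2}$, each metric property becomes a statement about a norm, and the $\sqrt{\gamma}$ in the contraction will emerge from how this $L^2$ norm interacts with dilation.

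For the metric part, non-negativity and symmetry of $\ell_2$ are immediate from the definition. For identity of indiscernibles, $\ell_2(P,Q)=0$ forces $F_P=F_Q$ almost everywhere; since CDFs are monotone and right-continuous, equality almost everywhere upgrades to equality everywhere, hence $P=Q$. The triangle inequality for $\ell_2$ is exactly Minkowski's inequality applied to $F_P-F_R=(F_P-F_Q)+(F_Q-F_R)$ in $L^2$. To lift these to $\bar{\ell}_2=\sup_{s,a}\ell_2$, I would note that a pointwise supremum of metrics is again a metric: non-negativity, symmetry and the identity property pass through the sup directly, while the triangle inequality follows from bounding each $(s,a)$-term by $\ell_2(P(s,a),Q(s,a))+\ell_2(Q(s,a),R(s,a))$ and then using that the supremum of a sum is at most the sum of the suprema. (Because $\mathcal{S}$ and $\mathcal{A}$ are finite, this supremum is in fact a maximum, so finiteness is not an issue.)

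For the contraction, I would first record two elementary transformation rules for $\ell_2$, both obtained by a change of variables inside the defining integral. Translation invariance: shifting a random return by a constant $c$ replaces $F(x)$ by $F(x-c)$, leaving $\int(F_P-F_Q)^2\,dx$ unchanged. Scaling: multiplying by $\gamma>0$ replaces $F(x)$ by $F(x/\gamma)$, and the substitution $y=x/\gamma$ produces a factor $\gamma$ on the squared distance, so $\ell_2$ itself picks up a factor $\sqrt{\gamma}$; this is precisely why the square root of the Cram{\'e}r distance yields a $\sqrt{\gamma}$-contraction rather than a $\gamma$-contraction. Since the reward $R(s,a)$ is a deterministic shift under Assumption~\ref{Assumption1}, applying both rules gives $\ell_2\big(R(s,a)+\gamma Z_1(s',a'),\,R(s,a)+\gamma Z_2(s',a')\big)^2=\gamma\,\ell_2\big(Z_1(s',a'),Z_2(s',a')\big)^2$ for each fixed next state-action pair.

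The core of the argument is the averaging over the next state-action pair. The distribution of $\mathcal{T}^{\pi}Z(s,a)$ is a mixture over $(s',a')\sim P(\cdot|s,a)\pi(\cdot|s')$, so its CDF is the corresponding average of the CDFs $F_{R+\gamma Z(s',a')}$. Writing the integrand of $\ell_2(\mathcal{T}^{\pi}Z_1(s,a),\mathcal{T}^{\pi}Z_2(s,a))^2$ as the square of an expectation over $(s',a')$, I would apply Jensen's inequality (convexity of $x\mapsto x^2$) to move the square inside the expectation, then use Fubini to swap the $(s',a')$-average with the integral over $x$. This bounds the quantity by $\gamma$ times the expectation over $(s',a')$ of $\ell_2(Z_1(s',a'),Z_2(s',a'))^2$, which is at most $\gamma\,\bar{\ell}_2(Z_1,Z_2)^2$ after replacing the average by the supremum. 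Taking square roots and then the supremum over $(s,a)$ yields $\bar{\ell}_2(\mathcal{T}^{\pi}Z_1,\mathcal{T}^{\pi}Z_2)\le\sqrt{\gamma}\,\bar{\ell}_2(Z_1,Z_2)$. I expect the Jensen-plus-Fubini step to be the main obstacle: it is where convexity of the squared distance is genuinely used, and one must verify that the mixture interpretation of the CDF is exactly right so that the convexity inequality applies before integrating in $x$.
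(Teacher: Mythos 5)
Your proof is correct and follows essentially the same route as the paper's: both establish the metric axioms by lifting the $\ell_2$ properties through the supremum, and both reduce the contraction to shift invariance, the $\sqrt{\gamma}$ scaling rule, and an averaging step over the next state-action pair $(s',a')$. The only difference is that you prove the ingredients from scratch (Minkowski, change of variables, Jensen plus Fubini) where the paper imports the shift and scale properties from Theorem 2 of \citep{bellemare2017cramer} and compresses the mixture step into ``holds from the definition of $P Z(s,a)$'' --- your Jensen-plus-Fubini argument is exactly what that unstated step requires.
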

The proof makes use of Theorem 2 in \citep{bellemare2017cramer}, and is deferred to appendix.

\subsection{ Gradient TD and Greedy-GQ}
We now review linear and nonlinear gradient TD and Greedy-GQ proposed by \cite{sutton2009convergent,bhatnagar2009convergent,maei2010toward}, which helps to   better understand the nonlinear distributional gradient TD and distributional Greedy-GQ. One approach in reinforcement learning for large scale problems  is to use a linear function approximation for the value function $V$. Particularly, the value function $\hat{V}(s)=\theta^T \phi(s)$, where the feature map is $\phi: \mathcal{S}\rightarrow \mathbb{R}^d$, and $\theta \in \mathbb{R}^d$ is the parameter of the linear model. The objective function of the gradient TD family is the mean squared projected Bellman error (MSPBE).
\begin{equation}\label{MSPBE}
MSPBE(\theta)=\frac{1}{2} \| \hat{V}-\Pi T \hat{V}  \|^2_D,
\end{equation}
where $\hat{V}$  is the vector of value function over $\mathcal{S}$, $D$ is a diagonal matrix with diagonal elements being the stationary distribution $d(s)$ over $\mathcal{S}$ induced by the policy $\pi$, and $\Pi$ is the weighted projection matrix onto the linear space spanned by $\phi(1),...,\phi(|\mathcal{S}|)$, which is 
$ \Pi= \Phi (\Phi^TD\Phi)^{-1} \Phi^T D$. Substitute $\Pi$ into \eqref{MSPBE}, the MSPBE can be written as
\begin{equation}\label{MSPBE1}
\begin{split}
MSPBE(\theta)=&\frac{1}{2} \| \Phi^T D ( \hat{V}-T \hat{V} ) \|^2_{ (\Phi^TD\Phi )^{-1} }\\
&= \frac{1}{2}\mathbb{E}[\delta \phi]^T \mathbb{E}[{\phi \phi^T}]^{-1} \mathbb{E}[\delta \phi],
\end{split}
\end{equation}
where $\delta$ is the TD error for a given transition $(s,r,s')$, i.e., $\delta=r+\gamma \theta^T\phi'-\theta^T\phi.$
Its negative gradient is $ -\frac{1}{2} \nabla MSPBE (\theta)=\mathbb{E}[(\phi-\gamma\phi')\phi^T ]w, $
where $w=\mathbb{E}[\phi \phi^T]^{-1}\mathbb{E}[\delta\phi]. $ \citet{sutton2009convergent} use the weight-duplication trick to update $w$ on a "faster" time scale as follows $ w_{t+1}=w_t+\beta_t (\delta_t-\phi_t^Tw_t)\phi_t.$ Two different ways to update $\theta$ leads to GTD2 and TDC.
\begin{equation}\label{equ:GTDTDC}
\begin{split}
&\theta_{t+1}=\theta_t+\alpha_t (\phi_t-\gamma \phi'_t)(\phi_t^Tw_t)~~~(GTD2),\\
&\theta_{t+1}=\theta_t+\alpha_t \delta_t \phi_t-\alpha_t\gamma \phi'_t(\phi_t^Tw_t)~~~(TDC).
\end{split}
\end{equation}

Once the nonlinear approximation is used, we can optimize a slightly different version of MSPBE. There is an additional term $h_t$ in the update rule 
$$\theta_{t+1}=\theta_t+\alpha_t \{(\phi_t-\gamma \phi'_t)(\phi_t^Tw_t)-h_t\}~~GTD2$$
See more discussion in section \ref{section:NDGTD}.

Similarly, Greedy-GQ optimizes following objective function,
$$J(\theta)= \|\Pi T^{\pi(\theta)} Q_{\theta}-Q_{\theta}\|_D^2,$$
where $\pi_{\theta}$ is a greedy policy w.r.t. $Q_{\theta}$. Reusing the weight-duplication trick, \citet{maei2010toward} give the update rule.
\begin{equation*}
\begin{split}
&\theta_{t+1}=\theta_t+\alpha_t[\delta_t \phi_t-\gamma (w_t^T\phi)\hat{\phi}],\\
&w_{t+1}=w_t+\beta_t[\delta_{t+1}-\phi^Tw_t]\phi_t,
\end{split}
\end{equation*}
where $\hat{\phi}$ is an unbiased estimate of expected value of the next state under $\pi_{\theta}.$

\section{Nonlinear Distributional Gradient TD}\label{section:NDGTD}In this section, we propose  distributional Gradient TD algorithms by considering the  Cram{\'e}r distance between value distribution of $Z(s,a)$ and $\mathcal{T} Z(s,a)$ which is a distributional counterpart of \cite{bhatnagar2009convergent}. To ease the exposition, in the following we consider the value distribution on state $s$ rather than the state-action pair $(s,a)$ since the extension to $(s,a)$ is straightforward.

\subsection{Distributional MSPBE (D-MSPBE)}
Suppose there are $|\mathcal{S}|=n$ states. One simple choice of the objective function is as follows 
\begin{equation}\label{equ:dmsbe}
\sum_{i=1}^{n} d(s_i) \ell_2^2 (Z(s_i),\mathcal{T}Z(s_i)).
\end{equation}
However, a major challenge to optimize \eqref{equ:dmsbe} is the double sampling problem, i.e.,  two independent samples are required from each state. To see that, notice that if we only consider the {\em expectation} of the return, \eqref{equ:dmsbe} 
reduces to the mean squared Bellman error (MSBE), and the corresponding algorithms to minimize MSBE are the well-known  residual gradient algorithms  \citep{baird1995residual}, which is known to require two independent samples for each state~\citep{dann2014policy}.  To get around the double sampling problem, we instead adapt MSPBE into its distributional version.  To simplify the problem, we follow \citep{bellemare2017distributional} and assume that the value distribution is discrete with range $[V_{\min}, V_{\max}]$
and whose support is the set of atoms $\{ z_j=V_{\min}+(j-1)\Delta z: 1\leq j\leq m \}$, $\Delta z:= \frac{V_{\max}-V_{\min}}{m-1}$. In practice $V_{\min}$ and $V_{\max}$ are not hard to get. For instance, suppose we know the bound on the reward $|r|<b$, then we can take $V_{\min}, V_{\max}$ as $\pm \frac{b}{1-\gamma}$. We further assume that the atom probability can be given by a parametric model $\theta$ such as a softmax function: 
$$ p_{\theta}(s_i, z_j)=\frac{\exp(l_\theta(s_i,z_j))}{\sum_{j=1}^{m} \exp(l_\theta(s_i,z_j))} ,$$
where $\ell_\theta(s_i,z_j)$ can be a non-linear function, e.g.,  a neural network. From an algorithmic perceptive, such assumption or approximation is necessary, since it is hard to represent the full space of probability distributions.

We denote the (cumulative) distribution function of $Z(s)$ as $F_\theta(s,z)$.  Notice $F_\theta (s,z)$ is non-linear  w.r.t.\ $\theta$ in general, thus it is not restricted to a hyperplane as that in the linear function approximation. Following the non-linear gradient TD in \citep{bhatnagar2009convergent}, we need to define  the tangent space at $\theta$. Particularly, we denote $F_\theta  \in \mathbb{R}^{nm\times1} $ as a vector of $F_\theta (s_i,z_j), i=1,...,n, j=1,...m$ and assume $F_\theta$ is a differentiable function of $\theta$. $\mathcal{M}=\{F_\theta \in \mathbb{R}^{nm\times1}| \theta \in \mathbb{R}^d\}$ becomes a differentiable submanifold of $\mathbb{R}^{mn\time 1}$. Define $  \phi_\theta(s,z)= \frac{\partial F_{\theta}(s,z)}{\partial \theta}  $ , then the tangent space at $\theta$ is $T \mathcal{M}_\theta=\{ {\Phi_\theta a}| a\in \mathbb{R}^d\}$, where $\Phi_\theta \in \mathbb{R}^{mn\times d} $ is defined as $(\Phi_\theta)_{((i,j),l)}= \frac{\partial}{\partial \theta_l} F_\theta(s_i,z_j)$, i.e., each row of it is $\phi^T_\theta (s_i,z_j)$. Let $\Pi_\theta$ be the projection that projects vectors to $T\mathcal{M}$. Particularly, to project the distribution function $F(s,z)$ onto the $\mathcal{T}M$ w.r.t. the Cram{\'e}r distance, we need to solve the following problem
\begin{equation}
\hat{\alpha}=\arg\min_\alpha \sum_{i=1}^{n}\sum_{j=1}^{m} d(s_i) \big( F (s_i,z_j)-\phi_\theta(s_i,z_j)^T \alpha   \big)^2,
\end{equation} 
where $F(s_i, z_j)$ is the value of distribution function of $Z(s_i)$ at $z_j$.
Since this is a least squares problem,  we have that the projection operator has a closed form $$\Pi_\theta=\Phi_\theta (\Phi_\theta^T D \Phi_\theta)^{-1} \Phi_\theta^T D,$$
where $D$ is a $nm\times nm$ diagonal matrix with diagonal elements being $ d(s_1) I^{m\times m},d(s_2) I^{m_\times m},..., d(s_n) I^{m_\times m}.$ Given this projection operator, we propose the distributional MSPBE (D-MSPBE). Particularly, the objective function to optimize is as follows 
\begin{equation*}
\underset{\theta}{\mbox{minimize:}}\quad \|F_{\theta}-\Pi G_\theta\|_D^2,
\end{equation*}
where $G_\theta \in \mathbb{R}^{nm\times 1} $ is the vector form of $G_\theta (s_i,z_j)$, and $G_\theta (s_i,z_j)$ is the value of distribution function of $\mathcal{T}Z(s_i)$ at atom $z_j$. Assume $(\Phi_\theta^T D \Phi_\theta)^{-1}$ is non-singular,  similar to the MSPBE, we rewrite the above formulation into another form.\\
{\mbox{\bf  D-MSPBE:}}
\begin{equation}\label{equ:DMSPBE}
\quad   \underset{\theta}{\mbox{minimize:}}\quad J(\theta):=\|\Phi_{\theta}^T D (F_\theta-G_{\theta})\|^2_{ (\Phi_{\theta}^T D \Phi_{\theta})^{-1}}.
\end{equation}
To better understand D-MSPBE, we compare it with 
MSPBE. First, in equation \eqref{MSPBE1}, we have the term $\hat{V}-T\hat{V} $, which is the difference between the value function $\hat{V}$ and $T\hat{V}$, while  we have the difference between the distribution of $Z$ and $TZ$ in equation \eqref{equ:DMSPBE}. Second, the $D$ matrix is slightly different, since in each state we need $m$ atoms to describe the value distribution. Thus we have the diagonal element as $d(s_i) I^{m\times m}$. Third, $\Phi_\theta$ is a gradient for $F_\theta$ and thus depends on the parameter $\theta$ rather than a constant feature matrix,  which is similar to \citep{bhatnagar2009convergent}.

\subsection{Distributional GTD2 and Distributional TDC}
In this section, we use the stochastic gradient to optimize the  D-MSPBE (equation \ref{equ:DMSPBE}) and derive the update rule of distributional GTD2 and distributional TDC.

The first step is to estimate $\Phi_\theta^T D (F_\theta-G_{\theta})$ from samples.  We denote $\hat{G}_{\theta}$ as the empirical distribution of $G_\theta$  and $\mathbb{E} \hat{G}_{\theta}=G_{\theta}$. Notice one unbiased empirical distribution $\hat{G}_\theta(s,\cdot)$ at state $s$ is the distribution of $ r+ \gamma Z(s')$, whose distribution function is $F_\theta(s', \frac{z-r}{\gamma})$ by simply shifting and shrinking the distribution of $Z(s')$. Then we have
\begin{equation*}
\begin{split}
\Phi_\theta^T D (F_{\theta}-G_{\theta})=&\mathbb{E} \sum_{j=1}^{m}\phi_\theta(s,z_j) \big( F_{\theta}(s,z_j)-\mathbb{E} \hat{G}_{\theta} (s,z_j)  \big)\\
= &\mathbb{E} \sum_{j=1}^{m}\phi_\theta(s,z_j) (F_\theta(s,z_j)-\hat{G}_\theta(s,z_j)).\\
\end{split}
\end{equation*}
Then we can write  D-MSPBE in the following way
\begin{equation*}
\begin{split}
&J(\theta)=\mathbb{E} \big(\sum_{j=1}^{m}\phi_\theta(s,z_j) (F_\theta(s,z_j)-\hat{G}_\theta(s,z_j))\big)^T\times\\ 
&\big( \mathbb{E} \sum_{j=1}^{m}\phi_\theta(s,z_j) \phi_\theta^T(s,z_j) \big)^{-1}  \big(\mathbb{E} \sum_{j=1}^{m}\phi_\theta(s,z_j)\times\\
&(F_\theta(s,z_j)-\hat{G}_\theta (s,z_j)) \big).
\end{split}
\end{equation*}

We define $\bm{\delta_\theta(s,z_j) }= \hat{G}_\theta (s,z_j)-F_{\theta} (s,z_j) $,  analogous to the temporal difference, and call it  {\em temporal distribution difference}.  To ease the exposition, we denote $A=\mathbb{E} \sum_{j=1}^{m}\phi_\theta(s,z_j) \phi_\theta^T(s,z_j).$   Then we have
\begin{equation}
\begin{split}
J(\theta)=\mathbb{E} \big( \sum_{j=1}^{m}\phi_{\theta}(s,z_j) &\delta_\theta (s,z_j) \big)^T A^{-1} \times\\ &\mathbb{E} \sum_{j=1}^{m}\big(\phi_{\theta}(s,z_j) \delta_\theta (s,z_j) \big).
\end{split}
\end{equation}
In the following theorem, we choose $\hat{G}_\theta(s,z)=F_\theta(s', \frac{z-r}{\gamma})$, an unbiased empirical distribution we mentioned above and give the gradient of D-MSPBE w.r.t. $\theta$. We defer the proof to the appendix.
\begin{theorem}\label{theorem:gradient}
	Assume that $F_\theta(s,z_j)$ is twice continuously differentiable in $\theta$ for any $s\in \mathcal{S}$ and $j\in \{1,...,m\}$, $\mathbb{E} \sum_{j=1}^{m}\phi_\theta(s,z_j) \phi_\theta^T(s,z_j)$ is non-singular in a small neighborhood of $\theta$. Denote $h=\mathbb{E} \sum_{j=1}^{m} (\delta_\theta (s,z_j)-w^T \phi_\theta(s,z_j))\nabla^2_{\theta} F_\theta (s,z_j)w$, then we have
	\begin{equation}\label{equ:DGTD_theta_theorem}
	\begin{split}
	-\frac{1}{2} \nabla_\theta J(\theta)= \mathbb{E} \sum_{j=1}^{m}\big(\phi_\theta(s,z_j)-&\phi_\theta(s',\frac{z_j-r}{\gamma} )\big)\times\\
	&\phi^T_\theta(s,z_j)w-h,	
	\end{split}
	\end{equation} 
	which has another form 	
	\begin{equation}\label{equ:DTDC_theta_theorem}
	\begin{split}
	-\frac{1}{2} \nabla_\theta J(\theta)=&-\mathbb{E} \sum_{j=1}^{m}  \phi_\theta(s',\frac{z_j-r}{\gamma} )\phi_{\theta}^T(s,z_j)w\\
	&+\mathbb{E} \sum_{j=1}^{m} (\phi_{\theta}(s,z_j)\delta_\theta(s,z_j))
	-h,
	\end{split}
	\end{equation}
	where $w=A^{-1}  \mathbb{E} \sum_{j=1}^{m}\big(\phi_{\theta}(s,z_j) \delta_\theta (s,z_j) \big). $
\end{theorem}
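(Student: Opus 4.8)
The plan is to treat $J(\theta)=b(\theta)^{T}A(\theta)^{-1}b(\theta)$ as a scalar function of $\theta$, where I abbreviate $b(\theta)=\mathbb{E}\sum_{j=1}^{m}\phi_\theta(s,z_j)\delta_\theta(s,z_j)$ and $A(\theta)=\mathbb{E}\sum_{j=1}^{m}\phi_\theta(s,z_j)\phi_\theta^{T}(s,z_j)$, so that $w=A^{-1}b$ by definition. The first step is the product rule, coordinatewise: for each $l$,
\[
\frac{\partial J}{\partial\theta_l}=2\Big(\frac{\partial b}{\partial\theta_l}\Big)^{T}A^{-1}b+b^{T}\frac{\partial A^{-1}}{\partial\theta_l}b.
\]
Using the identity $\partial_l A^{-1}=-A^{-1}(\partial_l A)A^{-1}$ (valid because the non-singularity hypothesis makes $A^{-1}$ exist and be differentiable in a neighborhood of $\theta$) and substituting $w=A^{-1}b$ \emph{after} differentiating gives
\[
-\tfrac{1}{2}\frac{\partial J}{\partial\theta_l}=-\Big(\frac{\partial b}{\partial\theta_l}\Big)^{T}w+\tfrac{1}{2}\,w^{T}\frac{\partial A}{\partial\theta_l}w.
\]
I would stress that $w$ is not differentiated: it is merely shorthand for $A^{-1}b$ introduced after differentiation, which is exactly the weight-duplication device of \citep{sutton2009convergent,bhatnagar2009convergent}.

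The second step is to differentiate $b$ and $A$ explicitly, which is where the nonlinearity of $F_\theta$ enters. Since $\phi_\theta(s,z_j)=\nabla_\theta F_\theta(s,z_j)$, the object $\partial_l\phi_\theta(s,z_j)$ is the $l$-th column of the Hessian $\nabla^2_\theta F_\theta(s,z_j)$; and since $\delta_\theta(s,z_j)=F_\theta(s',\frac{z_j-r}{\gamma})-F_\theta(s,z_j)$ one has $\partial_l\delta_\theta=\big(\phi_\theta(s',\frac{z_j-r}{\gamma})-\phi_\theta(s,z_j)\big)_l$. The twice-continuous-differentiability assumption justifies interchanging $\mathbb{E}$ with $\partial_l$ (dominated convergence on the finite state/atom sums). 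Collecting the coordinates $l=1,\dots,d$ back into vectors in $\mathbb{R}^{d}$, and using symmetry of the Hessian to identify the $l$-th entry $(\partial_l\phi_\theta)^{T}w$ with $(\nabla^2_\theta F_\theta\,w)_l$, the vector whose $l$-th entry is $(\partial_l b)^{T}w$ equals
\[
\mathbb{E}\sum_{j=1}^{m}\Big[\delta_\theta\,\nabla^2_\theta F_\theta\,w+\big(\phi_\theta'-\phi_\theta\big)\phi_\theta^{T}w\Big],
\]
and the vector whose $l$-th entry is $\tfrac12 w^{T}(\partial_l A)w$ equals
\[
\mathbb{E}\sum_{j=1}^{m}(w^{T}\phi_\theta)\,\nabla^2_\theta F_\theta\,w,
\]
with abbreviations $\phi_\theta=\phi_\theta(s,z_j)$, $\phi_\theta'=\phi_\theta(s',\frac{z_j-r}{\gamma})$.

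The third step combines the two displays. The two Hessian contributions merge through the scalar factor $(\delta_\theta-w^{T}\phi_\theta)$ into the single term $h=\mathbb{E}\sum_{j}(\delta_\theta-w^{T}\phi_\theta)\nabla^2_\theta F_\theta\,w$, while the surviving first-order term is $\mathbb{E}\sum_{j}(\phi_\theta-\phi_\theta')\phi_\theta^{T}w$, which is \eqref{equ:DGTD_theta_theorem}. For the alternative form \eqref{equ:DTDC_theta_theorem}, I split $(\phi_\theta-\phi_\theta')\phi_\theta^{T}w=\phi_\theta\phi_\theta^{T}w-\phi_\theta'\phi_\theta^{T}w$ and use $\mathbb{E}\sum_{j}\phi_\theta\phi_\theta^{T}w=Aw=b=\mathbb{E}\sum_{j}\phi_\theta\delta_\theta$, which is immediate from $w=A^{-1}b$.

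I expect the only genuinely delicate point to be the bookkeeping of the Hessian terms: they arise in two separate places—once from differentiating the feature inside $b$ (weighted by $\delta_\theta$) and once from differentiating $A^{-1}$ (weighted by $w^{T}\phi_\theta$)—and one must check that they recombine, with the correct sign, into the single $h$ rather than cancel or double-count. A secondary point worth stating explicitly is the justification for exchanging expectation and differentiation and for differentiating $A^{-1}$, both of which rest on the theorem's smoothness and local non-singularity hypotheses.
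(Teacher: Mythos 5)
Your proposal is correct and follows essentially the same route as the paper's own proof: apply the product rule to $J(\theta)=b^T A^{-1} b$, use $\partial_l A^{-1}=-A^{-1}(\partial_l A)A^{-1}$, substitute $w=A^{-1}b$ after differentiation (the weight-duplication trick), expand $\partial_l b$ and $\partial_l A$ so the two Hessian contributions merge into $h$ via symmetry of $\nabla^2_\theta F_\theta$, and obtain the TDC form from $Aw=b$. Your write-up is in fact slightly cleaner than the paper's (which contains a typo, writing $w^T(\partial_i A)^{-1}w$ for $\tfrac12 w^T(\partial_i A)w$), and it makes explicit the Hessian-symmetry and differentiation-under-expectation steps the paper leaves implicit.
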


Based on Theorem~\ref{theorem:gradient}, we obtain the algorithm of distributional GTD2 and distributional TDC. Particularly   \eqref{equ:DGTD_theta_theorem} leads to Algorithm \ref{alg:DGTD2}, and \eqref{equ:DTDC_theta_theorem} leads to Algorithm \ref{alg:DTDC}. The difference  between distributional gradient TD methods and regular ones are highlighted in boldface.

\begin{algorithm}[h]
	\caption{Distributional GTD2 for policy evaluation }
	\label{alg:DGTD2}
	\begin{algorithmic}
		\STATE {\bfseries Input:} step size $\alpha_t$, step size  $\beta_t$, policy $\pi$.
		\FOR{$t=0,1,...$}	
		\STATE{
			\begin{equation*}
			w_{t+1}=w_{t}+\beta_t\bm{\sum_{j=1}^{m}} \big(-\phi_{\theta_t}^T(s_t,z_j)w_t+  \bm{\delta_{\theta_t} } \big) \phi_{\theta_t}(s_t,z_j).
			\end{equation*}	
			\begin{equation*}
			\begin{split}
			\theta_{t+1}=&\Gamma [   \theta_t+  \alpha_t \{\bm{\sum_{j=1}^{m}}\big(\phi_{\theta_t}(s_t,z_j)\\
			&-\bm{\phi_{\theta_t}(s_{t+1},\frac{z_j-r_t}{\gamma}} )\big) \phi^T_{\theta_t}(s_t,z_j)w_t-h_t\}].
			\end{split}
			\end{equation*}				
			$\Gamma: \mathbb{R}^d\rightarrow \mathbb{R}^d $ is a projection onto an compact set $C$ with a smooth boundary. }
		\STATE   $h_t= \bm{\sum_{j=1}^{m}} (\bm{\delta_{\theta_t}} -w_t^T \phi_{\theta_t}(s_t,z_j))\nabla^2 F_{\theta_t} (s_t,z_j)w_t,$
		\STATE where $\bm{\delta_{\theta_t}}= F_{\theta_t} (s_{t+1},\frac{z_j-r_t}{\gamma})-F_{\theta_t}(s_t,z_j). $
		\ENDFOR    
	\end{algorithmic}
\end{algorithm}

\begin{algorithm}[h]
	\caption{Distributional TDC for policy evaluation }
	\label{alg:DTDC}
	\begin{algorithmic}
		\STATE {\bfseries Input:} step size $\alpha_t$, step size  $\beta_t$, policy $\pi$.
		\FOR{$t=0,1,...$}	
		\STATE{	
			\begin{equation*}
			w_{t+1}=w_{t}+\beta_t\bm{\sum_{j=1}^{m}} \big(-\phi_{\theta_t}^T(s_t,z_j)w_t+ \bm{ \delta_{\theta_t}}  \big) \phi_{\theta_t}(s_t,z_j).
			\end{equation*} 
			\begin{equation*}
			\begin{split}
			\theta_{t+1}=&\Gamma [   \theta_t+  \alpha_t \{\bm{\sum_{j=1}^{m}} \big( \bm{\delta_{\theta_t}}\phi_{\theta_t}(s_t,z_j) -\\
			&\bm{\phi_{\theta_t}(s_{t+1},\frac{z_j-r_t}{\gamma})}(\phi_{\theta_t}^T(s_t,z_j) w_t)\big)   -h_t\}].
			\end{split}
			\end{equation*}
			
			$\Gamma: \mathbb{R}^d\rightarrow \mathbb{R}^d $ is a projection onto an compact set $C$ with a smooth boundary.}
		\STATE   $h_t= \bm{\sum_{j=1}^{m}} (\bm{\delta_{\theta_t}} -w_t^T \phi_{\theta_t}(s_t,z_j))\nabla^2 F_{\theta_t} (s_t,z_j)w_t,$
		\STATE where $\bm{\delta_{\theta_t}}= F_{\theta_t} (s_{t+1},\frac{z_j-r_t}{\gamma})-F_{\theta_t}(s_t,z_j). $
		\ENDFOR    
	\end{algorithmic}
\end{algorithm}

Some remarks are in order. We use distributional GTD2 as an example, but all remarks hold  for the distributional TDC as well.

(1). We stress the difference between the the update rule of GTD2 in \eqref{equ:GTDTDC} and that of the distributional GTD2 (highlighted in boldface): In the distributional GTD2, we use the temporal distribution difference $\bm{\delta_{\theta_t}}$ instead of the temporal difference in GTD2. Also notice there is a summation over $z_j$, which corresponds to the integral in the Cram{\'e}r distance, since we need the difference over the whole distribution rather than a certain point. The term $\frac{z_j-r_t}{\gamma}$ comes from the shifting and shrinkage on the distribution function of $Z(s_{t+1})$. 

(2). The term $h_t$ results from the nonlinear function approximation, which is zero in the linear case. This term is similar to the one in nonlinear GTD2 \citep{bhatnagar2009convergent}. Notice we do not need to explicitly calculate the Hessian in the term $\nabla^2 F_{\theta_t} (s,z)w$. This term can be evaluated using forward and backward propagation in neural networks with the complexity scaling linearly w.r.t.\ the number of parameters in neural networks, see the work \citep{pearlmutter1994fast} or chapter 5.4.6 in \citep{christopher2016pattern}.  We give an example in the appendix to illustrate how to calculate this term.

(3).  It is possible that $ \frac{z_j-r_t}{\gamma}$ is not on the support of the distribution in practice. Thus we need to approximate it by projecting it on the support of the distribution, e.g., round to the nearest atoms. This projection step would lead to further errors, which is out of scope of this paper. We refer  readers to  related discussion in \citep{dabney2017distributional,rowland2018analysis}, and leave its analysis as a future work.


(4). The aim of $w_t$ is to estimate $w$ for a fixed value of $\theta$. Thus $w$ is updated on a "faster" timescale and parameter $\theta$ is updated on a "slower" timescale.

\section{Distributional Greedy-GQ}

In practice, we care more about the control problem. Thus in this section, we propose the distributional Greedy-GQ for the control setting. 
Now we denote $F_{\theta} ((s,a),z)$ as the distribution function of $Z(s,a)$. Policy $\pi_{\theta}$ is a greedy policy w.r.t. $Q(s,a)$. i.e., the mean of $Z(s,a)$. $G_{\theta}((s,a),z)$ is the distribution function of $\mathcal{T}^{\pi_{\theta}}Z((s_i,a_i))$. The aim of the distributional Greedy-GQ is to optimize following objective function.
\begin{equation*}
\min_{\theta} \|F_{\theta}-\Pi G_{\theta}\|_D^2 
\end{equation*}
Using almost similar derivation as distributional GTD2 (With only difference in notation and we omit the term $h_t$ here), we give the following algorithm \ref{alg:distributional_GQ}, analogous to the Greedy-GQ \cite{maei2010toward}. 
\begin{algorithm}[h]
	\caption{Distributional Greedy-GQ }
	\label{alg:distributional_GQ}
	\begin{algorithmic}
		\STATE {\bfseries Input:} step size $\alpha_t$, step size  $\beta_t$, $0\leq\eta\leq 1$ 
		\FOR{$t=0,1,...$}	
		\STATE{	
			
			$Q(s_{t+1},a)=\sum_{j=1}^{m} z_j p(s_t,a),$ where $p(s_t,a)$ is the density function w.r.t. $F_\theta((s_t,a))$. $a^*=\arg\max_a Q(s_{t+1},a).$
			\begin{equation*}
			\begin{split}
			w_{t+1}=w_{t}+\beta_t&{\sum_{j=1}^{m}} \big(-\phi_{\theta_t}^T((s_t,a_t),z_j)w_t+
			{ \delta_{\theta_t}}  \big)\\ &\times\phi_{\theta_t}((s_t,a_t),z_j).
			\end{split}
			\end{equation*} 
			\begin{equation*}
			\begin{split}
			&\theta_{t+1}=  \theta_t+  \alpha_t \{{\sum_{j=1}^{m}} \big( {\delta_{\theta_t}}\phi_{\theta_t}((s_t,a_t),z_j) -\\
			&\eta{\phi_{\theta_t}((s_{t+1},a^*),\frac{z_j-r_t}{\gamma})}(\phi_{\theta_t}^T((s_t,a_t),z_j) w_t)\big) \}.
			\end{split}
			\end{equation*}
		}
		\STATE where ${\delta_{\theta_t}}= F_{\theta_t} ((s_{t+1},a^*),\frac{z_j-r_t}{\gamma})-F_{\theta_t}((s_t,a_t),z_j). $
		\ENDFOR    
	\end{algorithmic}
\end{algorithm}

Some remarks are in order.
\begin{itemize}
	\item $0\leq \eta\leq 1$ interpolates the distributional Q-learning and distributional Greedy-GQ. When $\eta=0$, it reduces to the distributional Q-learning with Cram{\'e}r distance while C51 uses KL-divergence. When $\eta=1$ it is the distributional Greedy-GQ where we mainly use the temporal distribution difference $\delta_{\theta_t}$ to replace the TD-error in \cite{maei2010toward}.
	\item Unfortunately for the nonlinear function approximator and control setting, so far we do not have convergence guarantee.  If the linear function approximation is used, we may obtain a asymptotic convergence result following the similar argument in \cite{maei2010toward}. We leave both of them as the future work.
\end{itemize}

\section{Convergence analysis}

In this section, we analyze the convergence of distributional GTD2 and distributional TDC and leave the convergence of distributional Greedy-GQ as an future work. To the best of our knowledge, the convergence of control algorithm even in the non-distributional setting is still a tricky problem. The argument  essentially follows the two time scale analysis  (e.g., theorem 2 in \citep{sutton2009fast}  and  \citep{bhatnagar2009convergent}). We first define some notions used in our theorem. Given  a compact set $C\subset \mathbb{R}^d$, let $\mathcal{C} (C)$ be the space of continuous mappings $C\mapsto \mathbb{R}^d$. Given projection $\Gamma$ onto $C$, let operator $\hat{\Gamma}: \mathcal{C} (C)\mapsto \mathcal{C}(\mathbb{R}^d)$ be 
$$ \hat{\Gamma} v(\theta)= \lim_{0<\epsilon\rightarrow 0} \frac{\Gamma(\theta+\epsilon v(\theta))-\theta}{\epsilon}. $$
When $\theta \in C^{\circ}$ (interior of $C$), $\hat{\Gamma} v(\theta)=v(\theta)$ . Otherwise, if $\theta \in \partial C$, $\hat{\Gamma} v(\theta)$ is the projection of $v(\theta)$ to the tangent space of $\partial C$ at $\theta
$. Consider the following ODE:
$$\overset{\cdot}{\theta}=\hat{\Gamma} (-\frac{1}{2} \nabla_\theta J ) (\theta), \theta(0) \in C,$$
where $J(\theta)$ is the D-MSPBE in \eqref{equ:DMSPBE}. Let $K$ be the set of all asymptotically stable equilibria of the above ODE. By the definitions $K\subset C$. We then have the following convergence theorem, which proof is deferred to the appendix.

\begin{theorem}\label{Theorem}
	Let $ (s_t,r_t,s'_t)_{t\geq0}$ be a sequence of transitions satisfying Assumption \ref{Assumption1}. The positive step-sizes in Algorithm \ref{alg:DGTD2} and \ref{alg:DTDC} satisfy $\sum_{t=0}^{\infty} a_t=\infty$, $\sum_{t=0}^{\infty}\beta_t=\infty$, $\sum_{t=0}^{\infty} \alpha_t^2,\sum_{t=1}^{\infty} \beta_t^2 <\infty$ and $\frac{\alpha_t}{\beta_t}\rightarrow 0,$ as $t\rightarrow \infty$  . Assume that for any $\theta \in C$ and $s\in \mathcal{S}$ s.t. $d(s)>0$, $F_\theta$ is three times continuously differentiable. Further assume that for each $\theta\in C$, $\big( \mathbb{E} \sum_{j=1}^{m}\phi_\theta(s,z_j) \phi_\theta^T(s,z_j) \big)$ is nonsingular. Then $\theta_t\rightarrow K$ in Algorithm \ref{alg:DGTD2} and \ref{alg:DTDC}, with probability one, as $t\rightarrow \infty.$
\end{theorem}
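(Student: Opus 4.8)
The plan is to cast Algorithms~\ref{alg:DGTD2} and~\ref{alg:DTDC} as a two-time-scale stochastic approximation recursion and invoke the ODE method of Borkar, paralleling the nonlinear GTD analysis of \citet{bhatnagar2009convergent}; the summation over atoms and the temporal distribution difference $\delta_\theta$ only reshape the mean fields, not the structure of the argument. First I would rewrite each update in the canonical form $x_{t+1}=x_t+(\text{step})\,(g(x_t)+M_{t+1})$, splitting the deterministic mean field $g$ from a martingale-difference term $M_{t+1}$ equal to the sample update minus its conditional expectation given the history $\mathcal{F}_t$. Because $\mathcal{S}$ and $\mathcal{A}$ are finite, the atoms $\{z_j\}$ are finite, the reward is bounded, and the distribution values satisfy $F_\theta\in[0,1]$ (so $|\delta_\theta|\le 1$), the quantities $\phi_\theta$, $\nabla^2 F_\theta$, and hence the whole update vector are continuous in $\theta$ and therefore uniformly bounded on the compact set $C$. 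This delivers immediately the square-integrability and linear-growth bounds on $M_{t+1}$ demanded by the ODE method.

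Since $\alpha_t/\beta_t\to 0$, the iterate $w_t$ lives on the fast timescale while $\theta_t$ is quasi-static from its viewpoint. Holding $\theta$ fixed, the $w$-recursion is linear with mean field $\dot w=b(\theta)-A(\theta)w$, where $A(\theta)=\mathbb{E}\sum_{j}\phi_\theta(s,z_j)\phi_\theta^T(s,z_j)$ is the matrix already denoted $A$ above and $b(\theta)=\mathbb{E}\sum_{j}\delta_\theta(s,z_j)\phi_\theta(s,z_j)$. The nonsingularity hypothesis, together with continuity of $A(\theta)$ and compactness of $C$, forces $\lambda_{\min}(A(\theta))\ge\lambda_0>0$ uniformly; thus $A(\theta)$ is positive definite, $-A(\theta)$ is Hurwitz, and the ODE has the globally asymptotically stable equilibrium $w^{\ast}(\theta)=A(\theta)^{-1}b(\theta)$, which is exactly the $w$ appearing in Theorem~\ref{theorem:gradient}. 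This uniform lower bound is also what I would use to establish $\sup_t\|w_t\|<\infty$ almost surely (via a Borkar--Meyn style stability argument, or directly by noting that the recursion contracts toward the uniformly bounded target $w^{\ast}(\theta_t)$), so that the tracking error $w_t-w^{\ast}(\theta_t)\to 0$.

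On the slow timescale I would substitute the tracked value $w_t\approx w^{\ast}(\theta_t)$ into the $\theta$-update. By Theorem~\ref{theorem:gradient}, plugging $w^{\ast}(\theta)$ into the bracketed increment of either algorithm (including the Hessian term $h_t$) reproduces exactly $-\tfrac{1}{2}\nabla_\theta J(\theta)$. Hence the slow iterate tracks the projected ODE $\dot\theta=\hat\Gamma(-\tfrac{1}{2}\nabla_\theta J)(\theta)$; the projection $\Gamma$ onto the compact set $C$ keeps $\theta_t$ bounded and the boundary behaviour is absorbed through $\hat\Gamma$. The Kushner--Clark / Borkar two-time-scale theorem then yields $\theta_t\to K$ almost surely, with $K$ the set of asymptotically stable equilibria fixed before the statement.

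The delicate steps, and the ones I expect to be the main obstacle, are the uniform boundedness of the \emph{un-projected} fast iterate $w_t$ and the Lipschitz continuity of the full slow-timescale drift. The former rests entirely on the uniform positive-definiteness $\lambda_{\min}(A(\theta))\ge\lambda_0$ over $C$ established above. The latter requires that the Hessian term $\nabla^2 F_\theta\,w$ be Lipschitz in $(\theta,w)$, which is precisely why the hypothesis strengthens the twice-differentiability used in Theorem~\ref{theorem:gradient} to three-times continuous differentiability of $F_\theta$: it makes $\nabla^2 F_\theta$ continuous, hence bounded and Lipschitz on $C$. Once these two points are secured, verifying the remaining martingale and step-size conditions is routine given the finiteness of the state, action, and atom spaces.
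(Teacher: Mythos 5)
Your proposal is correct and follows essentially the same route as the paper's own proof: both cast the updates as a two-time-scale stochastic approximation with a mean-field/martingale-difference decomposition, identify the fast-timescale equilibrium $w(\theta)=A(\theta)^{-1}b(\theta)$ (existing by the nonsingularity assumption), observe that the slow drift evaluated at $w(\theta)$ equals $-\tfrac{1}{2}\nabla_\theta J(\theta)$ by Theorem~\ref{theorem:gradient}, and conclude $\theta_t\to K$ via the standard projected-ODE two-time-scale argument, with the three-times differentiability supplying Lipschitz continuity of the drifts. If anything, your treatment of the fast-iterate boundedness (uniform eigenvalue lower bound, Hurwitz mean field) is spelled out in more detail than the paper, which simply cites the standard arguments of \citet{sutton2009fast} and \citet{borkar2000ode} after listing the four verification conditions.
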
 

If we assume the distribution function can be approximated by the \textit{linear function}. We can obtain a \textit{finite sample} bound. Due to the limit of space we defer it to appendix.

\section{Experimental result}

\subsection{Distributional GTD2 and distributional TDC}
In this section we assess the empirical performance of the proposed distributional GTD2 and distributional TDC and compare the performance with their non-distributional counterparts, namely, $GTD2$ and $TDC$.  Since it is hard to compare the performance of distributional GTD2 and TDC with their non-distributional counterparts in the pure policy evaluation environment, we use a simple control problem cart-pole problem to test the algorithm, where we do several policy evaluation steps to get a accurate estimation of value function and then do a policy improvement step. To apply distributional GTD2 or distributional TDC, we use a neural network to approximate the distribution function $F_{\theta} ((s,a),z)$. Particularly, we use a neural network with one hidden layer, the inputs of the neural network are state-action pairs, and the output is a softmax function. There are $50$ hidden units and we choose the number of atoms as $30$ in the distribution, i.e., the number of outputs in softmax function is $30$. In the policy evaluation step, the update rule of $w$ and $\theta$ is simple, since we just need to calculate the gradient of $F_{\theta}$, which can be obtained by the forward and backward propagation. The update rule of $h_t$ is slightly more involved, where we have the term $\nabla^2 F_{\theta} (s,z_j)w_t$. Roughly speaking, it requires four times as many   computations as the regular back propagation and we present the update rule in the appendix.  In the control step, we use the $\epsilon$-greedy policy over the expected action values, where $\epsilon$ starts at $0.1$ and decreases gradually to $0.02$.  To implement regular nonlinear GTD2 and TDC \citep{bhatnagar2009convergent}, we still use one hidden layer neural network with 30 hidden units. The output is $Q_\theta(s,a)$. The control policy is $\epsilon$ greedy with $\epsilon=0.1$ at the beginning and decreases to $\epsilon=0.02$ gradually.  In the experiment, we choose discount factor $\gamma=0.9$. Since  reward is bounded in $[0,1] $, in distributional GTD2 and distributional TDC, we choose $V_{\min}=0$ and $V_{\max}=10$. In the experiment, we use $20$ episodes to evaluate the policy, and then choose the policy by the $\epsilon$-greedy strategy.  We report experimental results (mean performance with   standard deviation) in the left and mid panel of Figure \ref{Fig:cart-pole}. All experimental results are averaged over 30 repetitions.   We observe that the distributional GTD2 has the best result,  followed by the distributional TDC. The distributional TDC seems to improve the policy faster at the early stage of the training and then slows down. The performance of regular GTD2 and TDC are inferior than their distributional counterparts. We also observe that standard deviations of the distributional version are smaller than those of regular one. In addition, the performance of the distributional algorithms increases steadily with few oscillations. Thus the simulation results show that the distributional RL is more stable than the regular one which matches the argument and observations in  \citep{bellemare2017distributional}. In the right panel, we draw a distribution function of $Z(s,a)$ estimate by the algorithm.

To test whether the algorithm converges in off-policy setting, we run experiment on grid world to compare distributional GTD2 (with atoms number=50) and GTD2 in the off-policy setting. The target policy is set to be the optimal policy. The data-generating policy is a perturbation on the optimal policy (0.05 probability to choose random action). We ran distributional GTD2 and GTD2 and calculate MSPBE every 1k timestep. Results are shown in figure \ref{Fig:MSPBE}. Both GTD2 and distributional GTD2 converge with similar speed while distributional GTD2 has smaller variance.  

\begin{figure*}
	\includegraphics[width=0.32\textwidth]{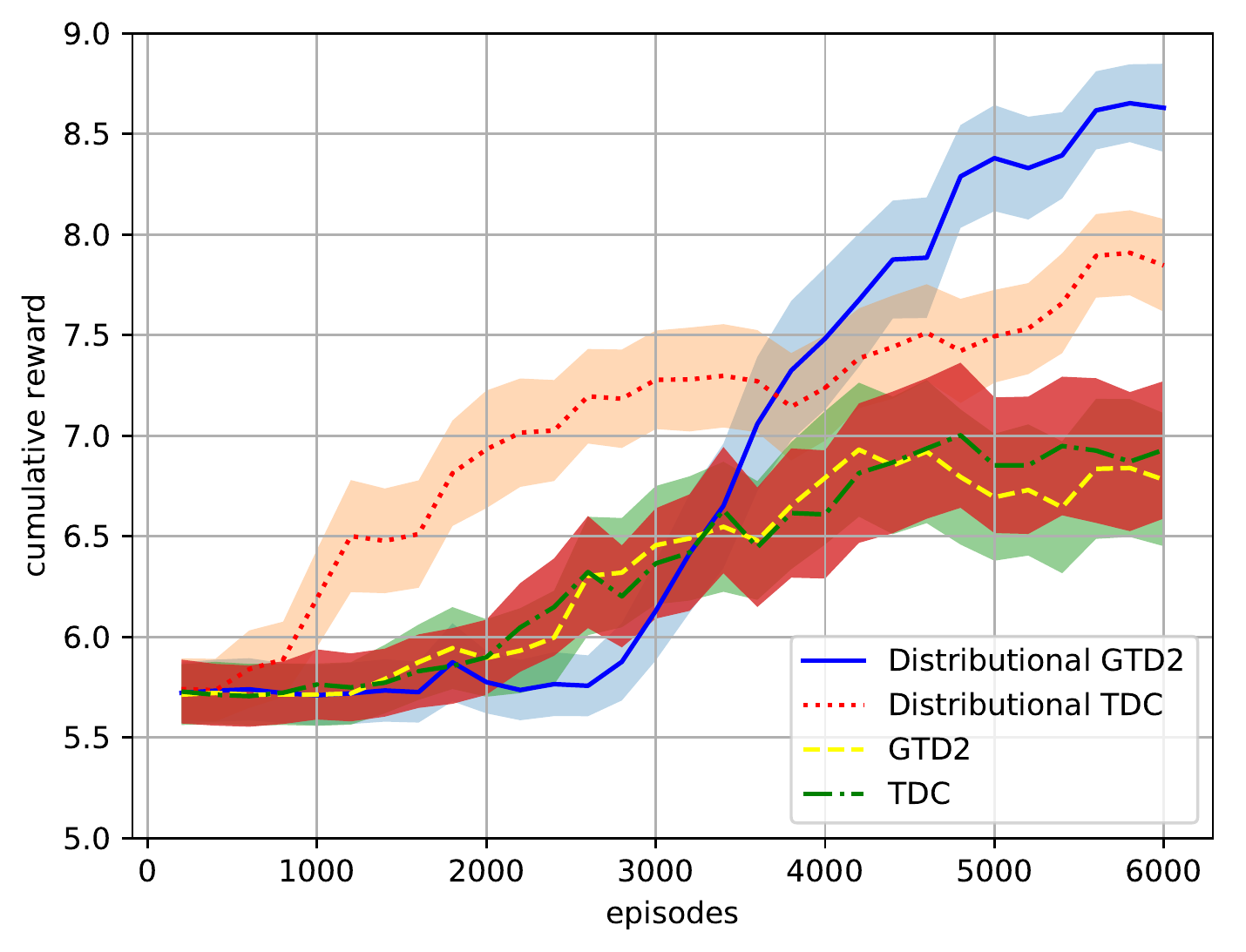}
	\includegraphics[width=0.32\textwidth]{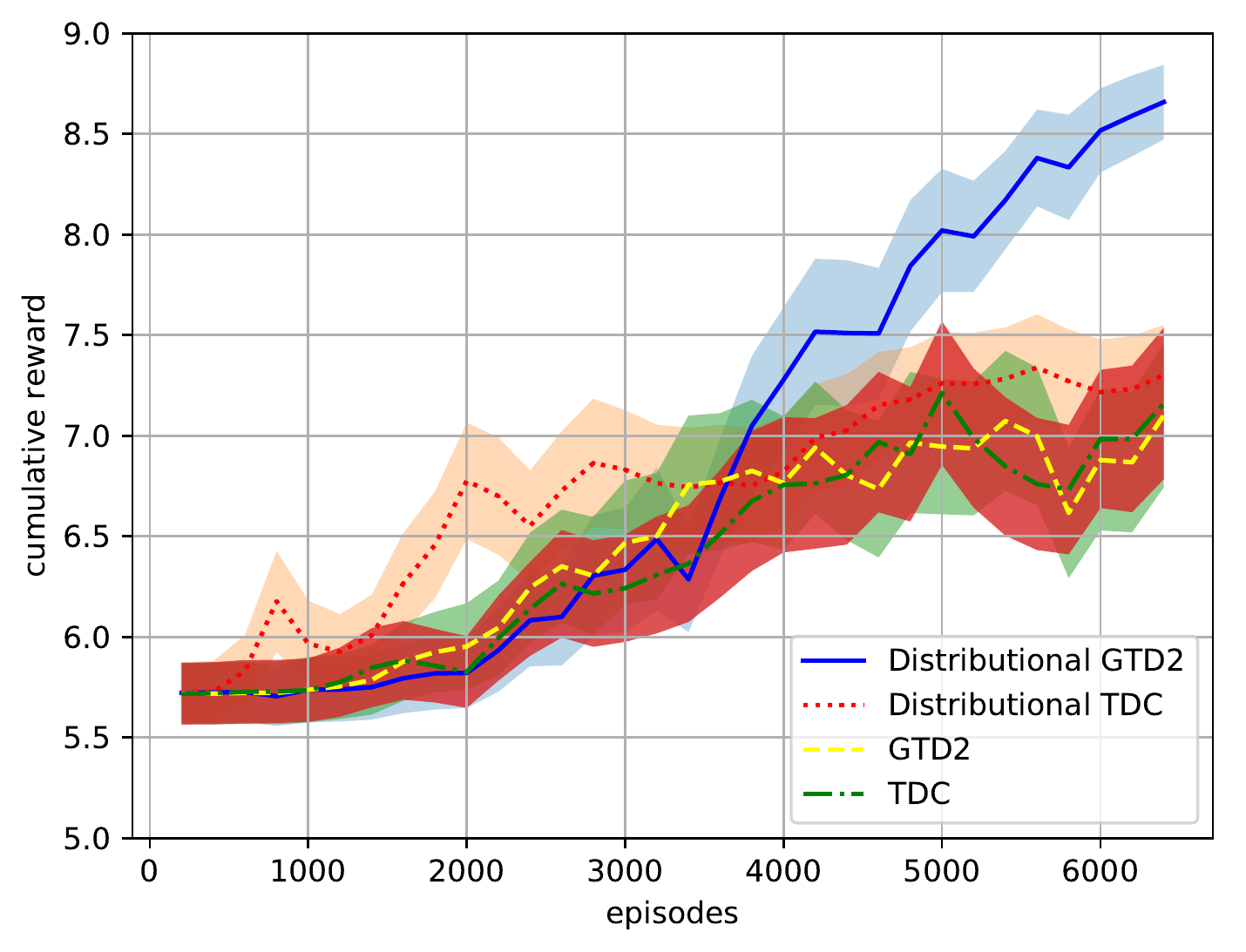}
	\includegraphics[width=0.32\textwidth]{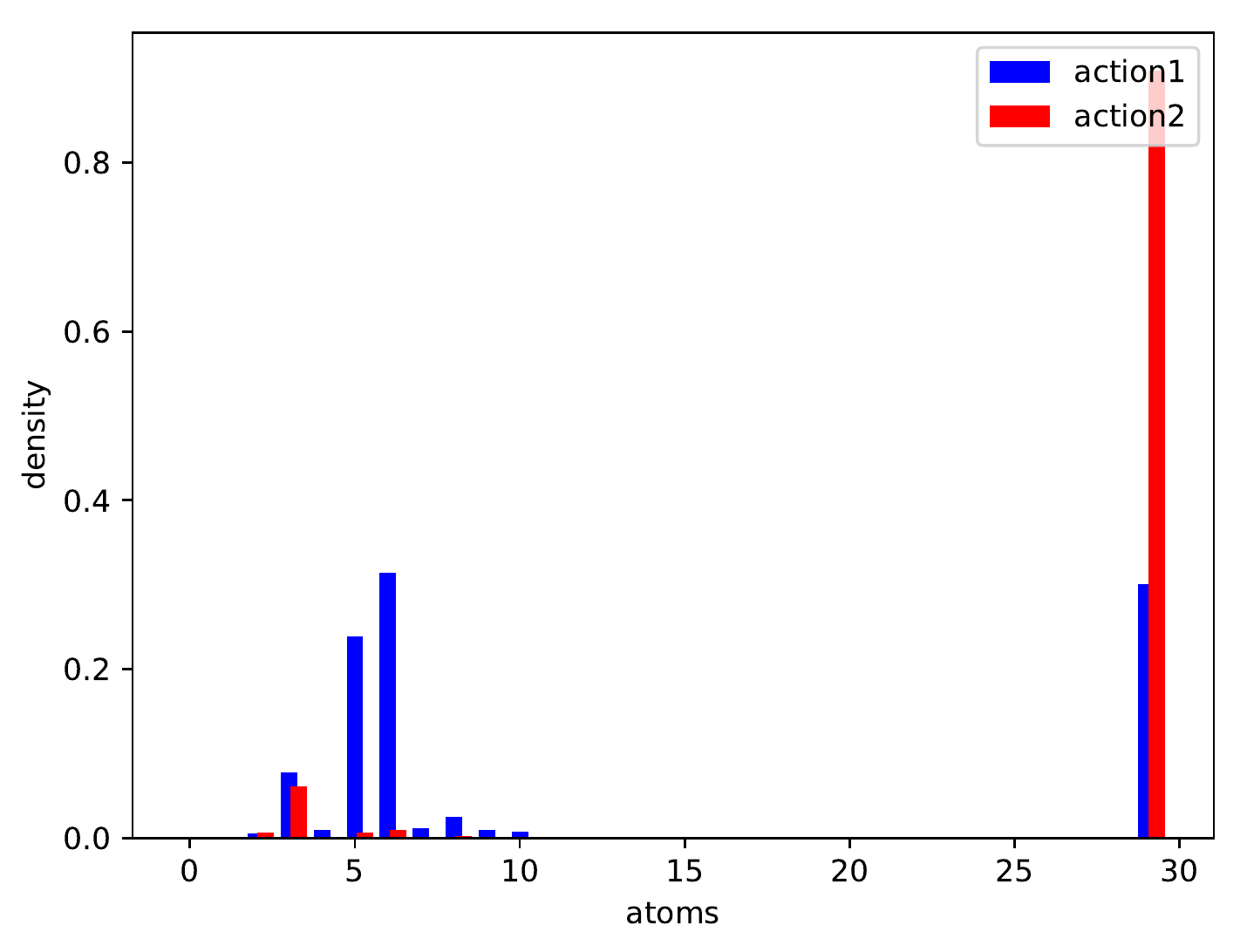}
	\caption{Left: Performance of algorithms in Cartpole v0. Middle: Performance of algorithms in Cartpole v1. Right: distribution of $Z(s,a)$ at some state $s$ }\label{Fig:cart-pole}
\end{figure*}

\begin{figure}
	\centering
	\includegraphics[width=0.35\textwidth]{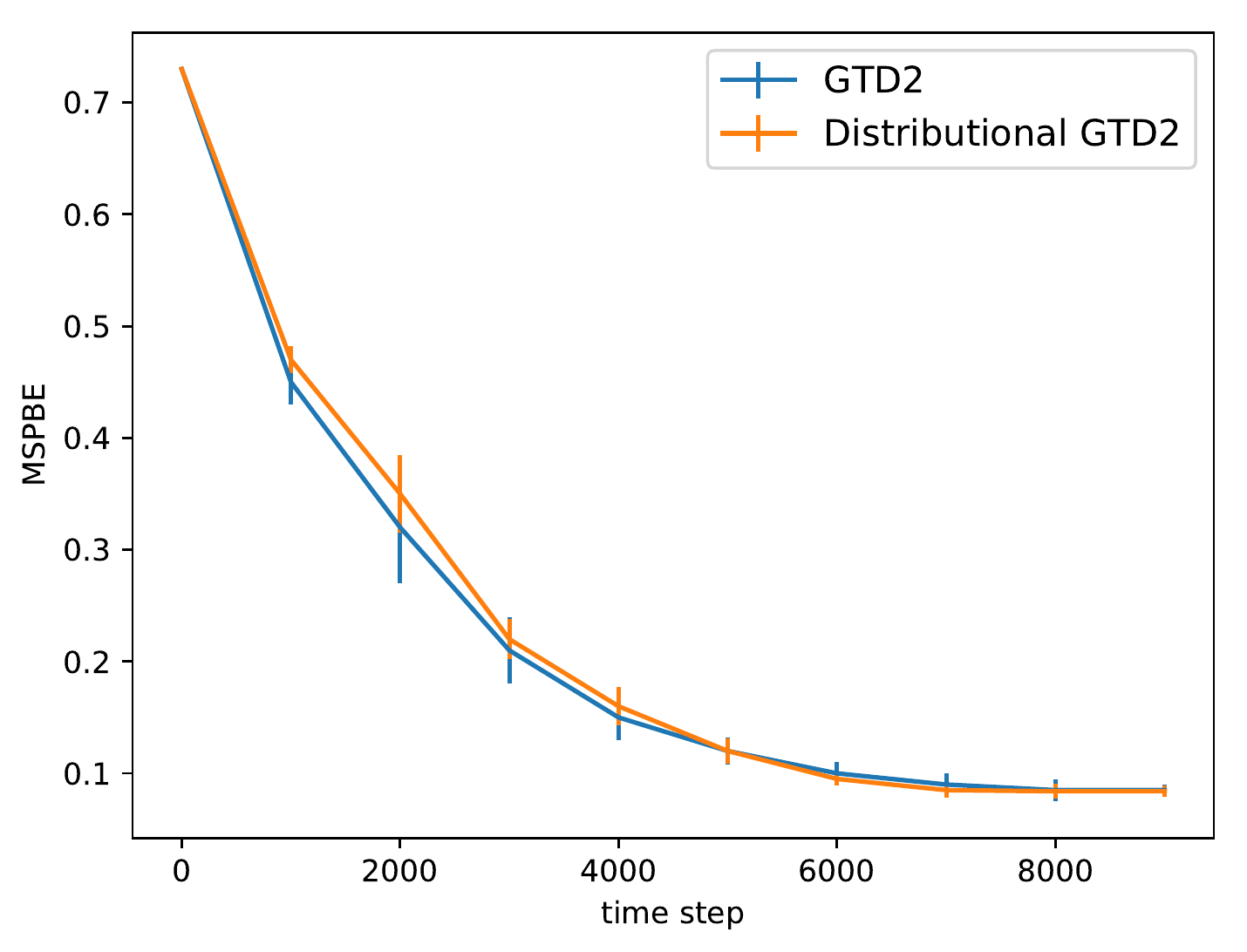}
	\caption{Compare GTD2 and distributional GTD2 in the off-policy setting with the measure of MSPBE.}\label{Fig:MSPBE}
\end{figure}
\begin{figure*}
	\includegraphics[width=0.32\textwidth]{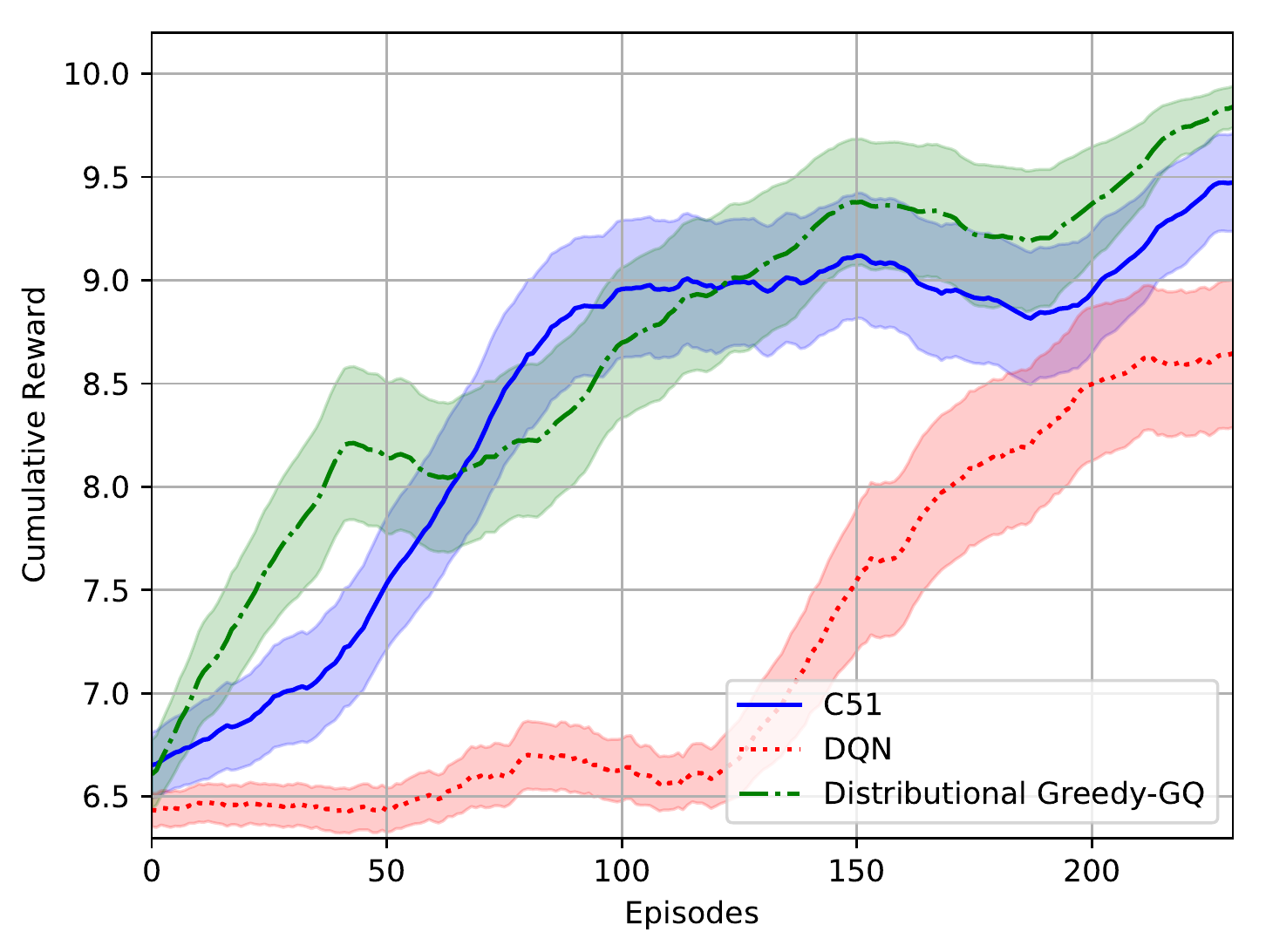}
	\includegraphics[width=0.32\textwidth]{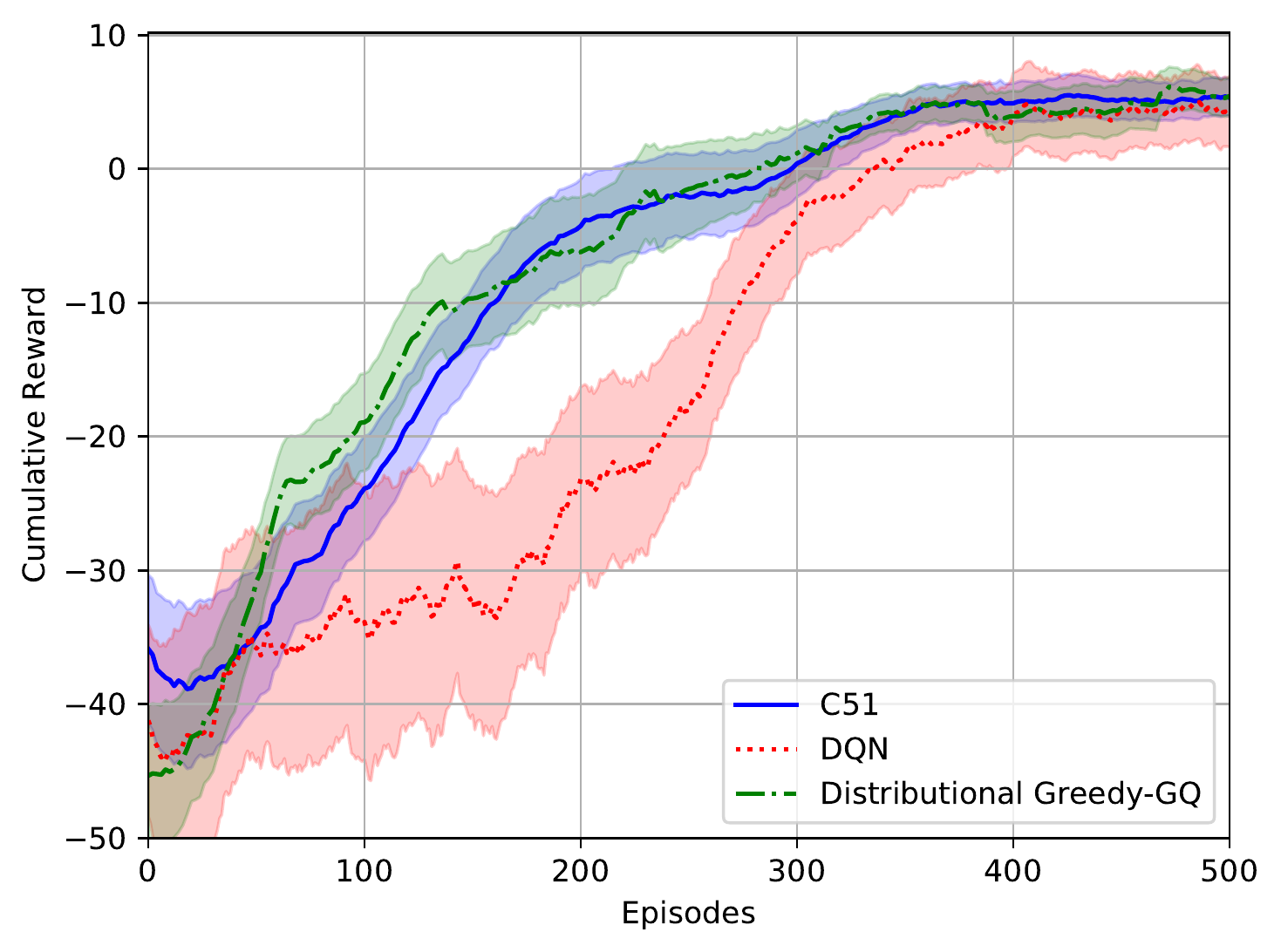}
	\includegraphics[width=0.32\textwidth]{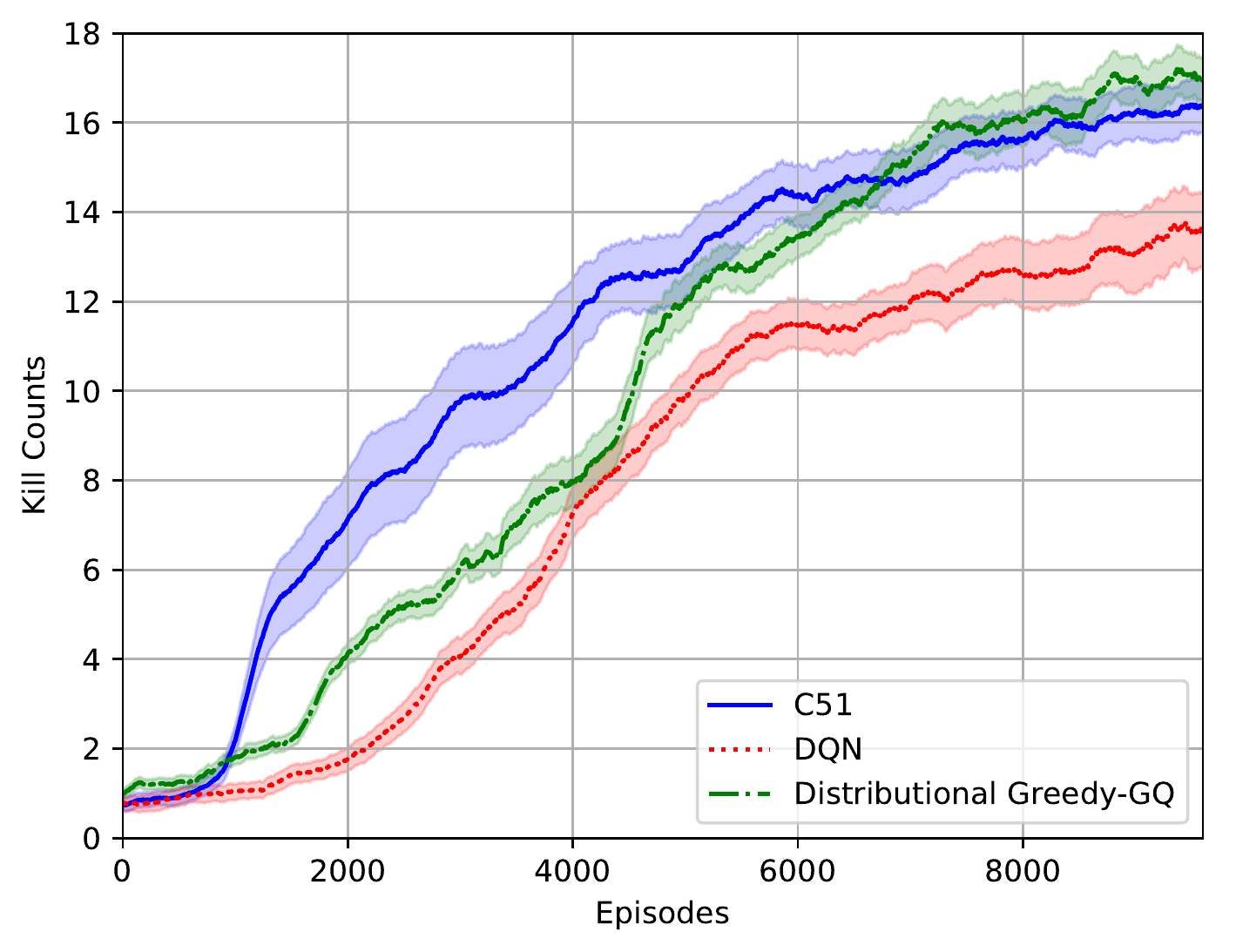}
	\caption{Left: Result in Cartpole v0. Middle: Result in lunarlander V2. Right: Result in vizdoom. In the left and middle panels, the x-axis is the training episode; the y-axis is the cumulative reward.  }\label{Fig:greedy_GQ}
\end{figure*}
\subsection{Distributional Greedy-GQ}

In practice, we are more interested in the control problem. Therefore, the aim of this section is to test the performance of distributional Greedy-GQ and compare the result with  DQN and distributional DQN (C51). All algorithm are implemented with off-policy setting with the standard experience replay.  Particularly, we test the algorithm in the environment  Cartpole v0, lunarlander v2 in the openai gym \cite{brockman2016openai} and Viz-doom \cite{kempka2016vizdoom}. In the platform of vizdoom, we choose the defend the center as a environment, where the agent occupies the center of a circular arena. Enemies continuously got spawned from far away and move closer to the agent until they are close enough to attack form close. The death penalty $-1$ is given by the environment. We give the penalty $-0.1$ if the agent loses ammo and health. Reward $+1$ is received if one enemy is killed. Totally, there are 26 bullets. The aim of the agent is to kill enemy and avoid being attacked and killed. For the  environment Cartpole and lunarlander, to implement distributional Greedy-GQ and C51, we  use two hidden-layer neural network with 64 hidden units to approximated the value distribution where activation functions are relu. The outputs are softmax functions with 40 units to approximate the probability atoms. We apply Adam with learning rate 5e-4 to train the agent. In vizdoom experiment, the first three layers are CNN  and then follows a dense layer where all activation functions are relu. The outputs are softmax functions with 50 units.  We set $V_{\min}=-10$ and $V_{\max}=20$ in the experiment.

We demonstrate all experiment in figure \ref{Fig:greedy_GQ}. In the experiment of Cartpole and vizdoom, the performance of distributional Greedy-GQ are comparable with C51 and both of them are better than the DQN. Particularly, in the left panel, distributional Greedy-GQ is slightly better than C51. The variance of them are both smaller than that of DQN possibly because the distributional algorithms are more stable. In the experiment of vizdoom, C51 learns faster than distributional Greedy-GQ at beginning but after 7000 episodes training distributional Greedy-GQ has same performance  with C51 and starts to outperform C51 later. In middle panel, C51 and distributional Greedy-GQ are slightly betten than its non-distributional counterpart DQN.

\section{Conclusion and Future work}

In this paper, we propose two non-linear distributional gradient TD algorithms and prove their convergences to a local optimum, while in the control setting, we propose the distributional Greedy-GQ.  We  compare the performance of our algorithm with their non-distributional counterparts, and show their superiority. Distributional RL has several advantages over the regular approach, e.g., it provides richer set of prediction, and the learning is more stable.
Based on this work on distributional RL, we foresee   many interesting future research directions about performing RL beyond point of the estimation of the value function. An immediate interesting one is to develop efficient exploration to devise the control policy using more distribution information rather than using the expectation.

\bibliography{DGTD}
\bibliographystyle{plainnat}

\onecolumn
\newpage
\appendix
\title{\bf Appendix: Nonlinear Distributional Gradient Temporal-Difference Learning }
\date{}
\author{}
\maketitle

\section{Proof of Theorem \ref{theorem:gradient}}

We take the gradient of $J(\theta)$ w.r.t. $\theta$ and denote $\partial_i=\frac{\partial}{\partial \theta_i}$. Notice that  $\phi_\theta(s,z_j)$ is a function depending on $\theta$ rather than the constant feature vector in the linear function approximation.
\begin{equation}
\begin{split}
-\frac{1}{2}\partial_i J(\theta)=&-\mathbb{E}\partial_i (\sum_{j=1}^{m} \phi_{\theta}(s,z_j)\delta_\theta (s,z_j) )^T A^{-1}  \mathbb{E} \sum_{j=1}^{m}\big(\phi_{\theta}(s,z_j) \delta_\theta (s,z_j) \big)\\
&+\frac{1}{2}\mathbb{E} (\sum_{j=1}^{m} \phi_{\theta}(s,z_j)\delta_\theta (s,z_j))^T A^{-1}  (\partial_i A) A^{-1} \mathbb{E} \sum_{j=1}^{m}\big(\phi_{\theta}(s,z_j) \delta_\theta (s,z_j) \big).    
\end{split}
\end{equation}

To  get around the double sampling problem \citep{dann2014policy}, we follow the idea in Gradient TD and introduce a new variable $w=A^{-1}  \mathbb{E} \sum_{j=1}^{m}\big(\phi_{\theta}(s,z_j) \delta_\theta (s,z_j) \big). $ 

Notice this $w$ is the solution of the following problem $ \min_w\|\Phi w -(\hat{G}_\theta-F_\theta)\|_D^2.$
Using stochastic gradient method, we can solve above problem and obtain the update rule of $w$ at time step $t$.
\begin{equation}\label{equ:DGTD_w}
w_{t+1}=w_{t}+\beta_t\sum_{j=1}^{m} \big(-\phi_\theta^T(s_t,z_j)w_t+  \hat{G}_\theta(s_t,z_j)-F_{\theta} (s_t,z_j) \big) \phi_\theta(s_t,z_j).
\end{equation}

Replace corresponding terms by $w$, we obtain
\begin{equation*}
\begin{split}
-\frac{1}{2} \partial_i J(\theta)=& -\mathbb{E} \partial_i \big(\sum_{j=1}^{m} \phi_{\theta}(s,z_j)\delta_\theta (s,z_j) \big)^T w+  w^T(\partial_i A)^{-1} w\\
=&-\mathbb{E} \sum_{j=1}^{m}\partial_i \delta_\theta (s,z_j)\phi_\theta(s,z_j)^Tw-\mathbb{E} \sum_{j=1}^{m}( \delta_\theta (s,z_j) \partial_i\phi_\theta(s,z_j))^Tw
+\sum_{j=1}^{m} [\phi(s,z_j)^T w (\partial_i \phi_{\theta}^T(s,z_j)) w ]\\
\end{split}
\end{equation*}
Thus we have
\begin{equation}\label{equ:DGTD_theta}
\begin{split}
-\frac{1}{2} \nabla_\theta J(\theta)=&-\mathbb{E} \sum_{j=1}^{m}\nabla_{\theta} \delta_\theta (s,z_j)\phi_\theta(s,z_j)^Tw-\mathbb{E} \sum_{j=1}^{m} (\delta_\theta (s,z_j)-w^T \phi_\theta(s,z_j))\nabla^2_{\theta} F_\theta (s,z_j)w\\ 
=& \mathbb{E} \sum_{j=1}^{m}\big(\phi_\theta(s,z_j)-\phi_\theta(s',\frac{z_j-r}{\gamma} )\big) \phi^T_\theta(s,z_j)w-\mathbb{E} \sum_{j=1}^{m} (\delta_\theta (s,z_j)-w^T \phi_\theta(s,z_j))\nabla^2_{\theta} F_\theta (s,z_j)w.
\end{split}
\end{equation} 
Observe that  
\begin{equation*}
\begin{split}
&\mathbb{E} \sum_{j=1}^{m}\nabla_{\theta} \delta_\theta (s,z_j)\phi_\theta(s,z_j)^Tw\\
= &\mathbb{E} \sum_{j=1}^{m}\big(  \phi_\theta(s',\frac{z_j-r}{\gamma} )-  \phi_\theta(s,z_j)\big) \phi^T_\theta(s,z_j)A^{-1}  \mathbb{E} \sum_{j=1}^{m}\big(\phi_{\theta}(s,z_j) \delta_\theta (s,z_j) \big)\\
=&\mathbb{E} \sum_{j=1}^{m}  \phi_\theta(s',\frac{z_j-r}{\gamma} )\phi^T_{\theta}(s,z_j)w-\mathbb{E} \sum_{j=1}^{m} (\phi_{\theta}(s,z_j)\delta_\theta(s,z_j)).
\end{split}
\end{equation*}
We can rewrite $-\nabla_{\theta} J(\theta)$ as follows
\begin{equation}\label{equ:DTDC_theta}
\begin{split}
-\frac{1}{2} \nabla_\theta J(\theta)=-\mathbb{E} \sum_{j=1}^{m} & \phi_\theta(s',\frac{z_j-r}{\gamma} )\phi_{\theta}^T(s,z_j)w+\mathbb{E} \sum_{j=1}^{m} (\phi_{\theta}(s,z_j)\delta_\theta(s,z_j))\\
&-\mathbb{E} \sum_{j=1}^{m} (\delta_\theta (s,z_j)-w^T \phi_\theta(s,z_j))\nabla^2_{\theta} F_\theta (s,z_j)w.
\end{split}
\end{equation}

\section{Proof of Theorem \ref{Theorem} }

We first prove the result on distributional GTD2.
We can write the update rule of the distributional GTD2 as 
\begin{equation}
\begin{split}
&w_{t+1}=w_{t}+\beta_t (f(\theta_t,w_t)+M_{t+1})\\
&\theta_{t+1}=\Gamma (\theta_{t} +\alpha_t (g(\theta_t,w_t)+N_{t+1} ) ),
\end{split}
\end{equation}

where $$f(\theta_t,w_t)= \mathbb{E}[ \sum_{j=1}^{m}\big(-\phi_{\theta_t}^T(s_t,z_j)w_t+  \delta_{\theta_t}  \big) \phi_{\theta_t}(s_t,z_j) \big)|\theta_t ], $$ 
$$M_{t+1}=\sum_{j=1}^{m}\big(-\phi_{\theta_t}^T(s_t,z_j)w_t+  \delta_{\theta_t} \big) \phi_{\theta_t} \big)-f(\theta_t,w_t).$$
$$ g(\theta_t,w_t)=\mathbb{E}[ \sum_{j=1}^{m}\big(\phi_{\theta_t}(s_t,z_j)-\phi_{\theta_t}(s_{t+1},\frac{z_j-r_t}{\gamma} )\big) \phi^T_{\theta_t}(s_t,z_j)w_t-h_t | \theta_t,w_t ] $$  
$$N_{t+1}=\sum_{j=1}^{m}\big(\phi_{\theta_t}(s_t,z_j)-\phi_{\theta_t}(s_{t+1},\frac{z_j-r_t}{\gamma} )\big) \phi^T_{\theta_t}(s_t,z_j)w_t-h_t-g(\theta_t,w_t).$$

We need to verify that there exists a compact set $B\subset \mathbb{R}^{2d}$ such that following four conditions hold.

\begin{enumerate}
	\item $f(\theta,w)$ and $g(\theta,w)$ are lipschitz continuous over $B$.
	\item $ \{M_t, \mathcal{G}_t\}$ and $ \{ N_t, \mathcal{G}_t\}$ are martingale difference sequences, where $\mathcal{G}_t=\sigma(r_i,\theta_i,w_i,s_i, i\leq t; s'_i,i<t)$ are increasing sigma fields.
	\item $\{ (w_t(\theta), \theta) \}$ with $w_t(\theta)$ obtained as $w_{t+1}=w_{t}+\beta_t\sum_{j=1}^{m} \big(-\phi_{\theta_t}^T(s_t,z_j)w_t+  \delta_{\theta_t}  \big) \phi_{\theta_t}(s_t,z_j) $ almost surely stays in $B$ for any choice of $(w_0(\theta),\theta)$ in $B$.
	\item ${(w,\theta_t)}$ almost surely stays in $B$ for any choice of $ (w,\theta_0)\in B .$	 
\end{enumerate}

If above four conditions are satisfied and step size $\alpha_t$, $\beta_t$ are chosen according to our assumption, then $\theta_t$ converges almost surely to the set of asymptotically stable equilibria of $\dot{\theta}=\hat{\Gamma} g(\theta, w(\theta))$ using the standard argument in \citep{sutton2009fast}. 
$w(\theta)$ is the equilibrium of 
$$\dot{w}=\mathbb{E} \sum_{j=1}^{m}\big(-\phi_{\theta_t}^T(s_t,z_j)w_t+  \delta_{\theta_t}  \big) \phi_{\theta_t}(s_t,z_j) \big). $$
Thus
$$w(\theta)=(\mathbb{E} \sum_{j=1}^{m}\phi_\theta(s,z_j) \phi_\theta^T(s,z_j))^{-1}\mathbb{E} \sum_{j=1}^{m}\big(\phi_{\theta}(s,z_j) \delta_\theta (s,z_j) \big)^T $$ which exists by our assumption.

Thus we have $g(\theta,w(\theta))=-\frac{1}{2} \nabla_\theta J(\theta).$

The condition 1 holds  from our assumption $F_{\theta}$ is three times continuously differentiable. Condition 2 holds naturally because of the way to construct $M_t$ and $N_t$. As for condition 3, because $w_t (\theta)$ converges to $w(\theta)$ using the standard argument in \citep{borkar2000ode}. Since $\theta$ in a bounded set $C$, thus $w_t(\theta)$.  Condition 4 is satisfied since $C$ is bounded.

The analysis on distributional TDC is in a similar manner. Here we just need to define a different $g(\theta_t, w_t)$ and $N_t$.

$$g(\theta_t,w_t)=\mathbb{E}[ \sum_{j=1}^{m} \big( \delta_{\theta_t}\phi(\theta_t,z_j) -\phi_{\theta_t}(s_{t+1},\frac{z_j-r_t}{\gamma})(\phi_{\theta_t}^T(s_t,z_j) w_t)\big)   -h_t|\theta_t,w_t],$$

and 

$$ N_{t+1}=\sum_{j=1}^{m} \big( \delta_{\theta_t}\phi(\theta_t,z_j) -\phi_{\theta_t}(s_{t+1},\frac{z_j-r_t}{\gamma})(\phi_{\theta_t}^T(s_t,z_j) w_t)\big)   -h_t-g(\theta_t,w_t).$$

We need to verify condition 1 and 2 again. $N_t$ is a martingale difference by its construction. Condition 1 holds using the assumption $F_\theta$ is three times continuously differentiable.

\section{The convergence result with linear function approximation}

In this section, we give a finite sample analysis when the distribution function is approximated by the linear function. Particularly, we assume $F_\theta(s,z)=\phi(s,z)^T\theta.$ 

Now the D-MSPBE reduces to 
\begin{equation*}
\begin{split}
&\frac{1}{2}J(\theta)=\frac{1}{2}\mathbb{E} \big(\sum_{j=1}^{m}\phi(s,z_j) (F_\theta(s,z_j)-\hat{G}_\theta(s,z_j))\big)^T
\big( \mathbb{E} \sum_{j=1}^{m}\phi(s,z_j) \phi^T(s,z_j) \big)^{-1}\\ & \big(\mathbb{E} \sum_{j=1}^{m}\phi(s,z_j)
(F_\theta(s,z_j)-\hat{G}_\theta (s,z_j)) \big).
\end{split}
\end{equation*}

Define $C=\mathbb{E} \sum_{j=1}^{m}\phi(s,z_j) \phi^T(s,z_j)$, $A=\mathbb{E}\sum_{i=1}^{m} \phi(s,z_j) (\phi^T(s,z_j)-\phi(s,\frac{z_j-r}{\gamma}) )$

Thus D-MSPBE is $$ \min_{\theta}\frac{1}{2}\|A\theta\|^2_{C^{-1}} $$

Using the knowledge on the convex conjugate function, we have
\begin{equation}\label{equ:saddle}
\min_{\theta} \frac{1}{2}\|A\theta\|^2_{C^{-1}}=\min_{\theta}\max_{w} L(\theta,w):= -\langle A\theta,w \rangle-\frac{1}{2}\|w\|^2_C.
\end{equation}

To solve this primal-dual problem, we apply Mirror-Prox algorithm in \cite{juditsky2011solving}.

Now the update rule of distributional GTD2 reduces to

\begin{equation}\label{equ:update_linear}
\begin{split}
&w_{t+1}=\Pi_{W}( w_{t}+\alpha_t{\sum_{j=1}^{m}} \big(-\phi^T(s_t,z_j)w_t+ { \delta_{\theta_t}}  \big) \phi(s_t,z_j))\\
&\theta_{t+1}= \Pi_{\Theta}(
\theta_t+\alpha_t {\sum_{j=1}^{m}}\big(\phi_{\theta_t}(s_t,z_j)
-{\phi_{\theta_t}(s_{t+1},\frac{z_j-r_t}{\gamma}} )\big) \phi^T_{\theta_t}(s_t,z_j)w_t),
\end{split}
\end{equation}
where $\phi$ is a constant feature vector, $\Pi_{W}$ and $\Pi_{\Theta}$ denote the projection on the compact set $W$ and $\Theta$.

The output of the algorithm is $\bar{\theta}_n:=\frac{\sum_{t=1}^{n}\alpha_t \theta_t}{\sum_{t=1}^{n}\alpha_t}$, $\bar{w}_n:=\frac{\sum_{t=1}^{n}\alpha_t y_t}{\sum_{t=1}^{n} a_t}.$

\begin{definition}
	The error function of a saddle point problem $\min_\theta \max_w L(\theta,w)=\langle b-A\theta,w \rangle+F(\theta)-K(w)$ at each point $(\theta,w)$ is defined as
	$Err(\theta', w')=\max_w L(\theta',w)-\min_\theta L(\theta, w')$
\end{definition}

\begin{assumption}\label{assumption:linear}
	1. We assume $ \Theta, W$ are compact,  The saddle point $(\theta^*, w^*)$ is in the set $\Theta\times W$. We define $D_\theta=[\max_{\theta\in \Theta} \|\theta\|_2^2-\min_{\theta\in \Theta} \|\theta\|_2^2]^{1/2}$ and $D_w=[\max_{w\in W} \|w\|_2^2-\min_{w\in W} \|w\|_2^2]^{1/2}$.  We also define $R=\max\{ \max_{\theta\in \Theta}\|\theta\|, \max_{w\in W} \|w\|. \}$
	
	2. We assume that $C=\mathbb{E} \sum_{j=1}^{m}\phi(s,z_j) \phi^T(s,z_j)$, $A=\mathbb{E}\sum_{i=1}^{m} \phi(s,z_j) (\phi^T(s,z_j)-\phi(s,\frac{z_j-r}{\gamma}) )$ are not singular.\\
	3. We assume the features $\phi_i,\phi_i'$ have uniformly bounded second moments.
\end{assumption}

We denote$\hat{A}_t$ and $\hat{C}_t$ are unbiased estimator of $A$ and $C$. We assume the variance of stochastic gradient (see our proof in the following) $\hat{A}_tw_t$ is bounded by $\sigma_1^2$ and the variance of $ \hat{A}_t\theta_t+\hat{C}w_t$ is boundd by $\sigma_2^2$. Then we have following proposition.
\begin{proposition}
	
	Suppose assumption \ref{assumption:linear} is satisfied,	let $(\bar{\theta}_n, \bar{w}_n)$ be the output of the distributional GTD2  after n iterations, Then, with probability at least $1-\delta$, we have
	
	$$Err(\bar{\theta}_n,\bar{y}_n)\leq \sqrt{\frac{10}{n}} (8+2\log (2/\delta))R\sqrt{ \sigma_1^2+\sigma_2^2+3\|A\|_2^2 R^2+2\sigma_{\max}(C) R^2}.$$
	
	Thus the convergence rate is $\mathcal{O} (\sqrt{\frac{1}{n}})$.
\end{proposition}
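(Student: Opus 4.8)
The plan is to treat the linear D-MSPBE, in its saddle-point form \eqref{equ:saddle}, as a stochastic monotone variational inequality and then invoke the convergence guarantee of the stochastic Mirror-Prox algorithm of \cite{juditsky2011solving}. First I would record that, since $C=\mathbb{E}\sum_j \phi(s,z_j)\phi^T(s,z_j)$ is positive definite by Assumption \ref{assumption:linear}, the function $L(\theta,w)=-\langle A\theta,w\rangle-\frac{1}{2}\|w\|_C^2$ is affine (hence convex) in $\theta$ and strongly concave in $w$. Consequently the associated operator $\Phi(\theta,w)=(\nabla_\theta L,\,-\nabla_w L)=(-A^Tw,\;A\theta+Cw)$ is affine and monotone on the compact domain $\Theta\times W$, and solving the saddle problem is equivalent to finding a zero of $\Phi$ over $\Theta\times W$. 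The error function $Err$ is exactly the duality gap that the Mirror-Prox analysis controls, and the reported iterate $(\bar\theta_n,\bar w_n)$ is the ergodic average of the trajectory produced by \eqref{equ:update_linear}.

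Next I would verify that \eqref{equ:update_linear} is a stochastic approximation of $\Phi$ under the Euclidean mirror map, so that the proximal steps reduce to the projections $\Pi_\Theta,\Pi_W$. In the linear case $\delta_{\theta_t}=\phi(s_{t+1},\tfrac{z_j-r_t}{\gamma})^T\theta_t-\phi(s_t,z_j)^T\theta_t$, and a direct computation shows that the $\theta$-increment $\sum_j(\phi(s_t,z_j)-\phi(s_{t+1},\tfrac{z_j-r_t}{\gamma}))\phi^T(s_t,z_j)w_t$ has conditional mean $A^Tw_t=-\nabla_\theta L(\theta_t,w_t)$, while the $w$-increment has conditional mean $-(A\theta_t+Cw_t)=\nabla_wL(\theta_t,w_t)$, given the filtration $\mathcal{G}_t$. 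Writing these per-step directions as $\hat A_t w_t$ and $\hat A_t\theta_t+\hat C_t w_t$ with $\hat A_t,\hat C_t$ the single-sample estimates of $A,C$, they are unbiased oracle calls for the components of $\Phi$, and the bounded-second-moment part of Assumption \ref{assumption:linear} supplies the conditional variance bounds $\sigma_1^2$ and $\sigma_2^2$, respectively.

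With the problem cast this way, the remaining work is to instantiate the Mirror-Prox bound. The theorem needs an upper bound $M^2$ on the conditional second moment of the oracle, which I would get by splitting $\mathbb{E}[\|\hat A_tw_t\|^2\mid\mathcal{G}_t]\le\sigma_1^2+\|Aw_t\|^2$ and $\mathbb{E}[\|\hat A_t\theta_t+\hat C_tw_t\|^2\mid\mathcal{G}_t]\le\sigma_2^2+\|A\theta_t+Cw_t\|^2$ into noise plus deterministic magnitude, and then bounding the deterministic magnitudes over the radius-$R$ domain in terms of $\|A\|_2$ and $\sigma_{\max}(C)$ using $\|\theta\|,\|w\|\le R$; collecting these pieces produces the quantity $\sigma_1^2+\sigma_2^2+3\|A\|_2^2R^2+2\sigma_{\max}(C)R^2$ appearing under the square root. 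Feeding $M$, the domain radius $R$, and a step size $\alpha_t$ tuned to the horizon $n$ into the high-probability guarantee of \cite{juditsky2011solving} then yields $Err(\bar\theta_n,\bar w_n)=\mathcal{O}(1/\sqrt{n})$ with the stated explicit constant, where the factor $(8+2\log(2/\delta))$ is generated by the Azuma/Freedman-type martingale concentration embedded in that theorem, applied to the bounded martingale-difference noise $\hat A_t-A$ and $\hat C_t-C$.

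The main obstacle I anticipate is bookkeeping rather than conceptual: matching this particular affine saddle problem to the precise hypotheses of the stochastic Mirror-Prox theorem, and tracking carefully how the oracle second moment decomposes so that the variance contributes $\sigma_1^2+\sigma_2^2$ and the deterministic magnitude contributes exactly $3\|A\|_2^2R^2+2\sigma_{\max}(C)R^2$, so that the final constant matches the statement. A secondary subtlety is that \eqref{equ:update_linear} uses a single oracle evaluation per step rather than the two-call extragradient form; I would either argue that the monotone-plus-strongly-concave structure lets the single-call scheme inherit the same $\mathcal{O}(1/\sqrt{n})$ ergodic gap rate, or state the result for the extragradient variant so that the cited bound of \cite{juditsky2011solving} applies verbatim.
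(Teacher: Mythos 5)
Your proposal follows essentially the same route as the paper's proof: the same saddle-point reformulation \eqref{equ:saddle}, the same Euclidean distance-generating functions $\tfrac{1}{2}\|\cdot\|_2^2$ so that the prox steps become the projections $\Pi_\Theta,\Pi_W$, the identical noise-plus-magnitude decomposition of the oracle second moment yielding the constant $\sigma_1^2+\sigma_2^2+3\|A\|_2^2R^2+2\sigma_{\max}(C)R^2$, and the final appeal to Proposition 3.2 of \cite{juditsky2011solving}. The single-call-versus-extragradient subtlety you flag at the end is genuine, and in fact the paper itself does not resolve it~---~it writes the one-step projected update \eqref{equ:update_linear} (even calling it ``mirror descent'' inside the proof) while citing the Stochastic Mirror-Prox guarantee verbatim~---~so your explicit acknowledgment of this mismatch makes your write-up, if anything, more careful than the original.
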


\begin{proof}
	The proof is easy where we need to verify the condition in the proposition 3.2 of \cite{juditsky2011solving}, thus we just list the main step here. In the following, for simplicity, we assume the reward function $R(s,a)$ is a deterministic function of $(s,a)$.
	Define the stochastic gradient vector $M$ used in the the \cite{juditsky2011solving}. 
	\begin{equation}
	M_t=\begin{bmatrix}
	M_{\theta}(\theta_t, w_t)\\
	-M_{w}(\theta_t,w_T)
	\end{bmatrix}=
	\begin{bmatrix}
	-\hat{A}_t^Tw_t\\
	\hat{A}_t\theta_t+\hat{C}_tw_t
	\end{bmatrix}
	\end{equation}	
	
	where $\hat{A}_t$ and $\hat{C}_t$ are unbiased estimator of $A$ and $C$.
	
	We apply mirror decent on the problem \eqref{equ:saddle}, and have the 
	update rule on $(\theta,w)$.
	
	\begin{equation}
	\begin{bmatrix}
	\theta_{t+1}\\
	w_{t+1}
	\end{bmatrix}=
	\Pi_{\Theta\times W}
	(\begin{bmatrix}
	\theta_{t}\\
	w_t
	\end{bmatrix}-\alpha_t M_t)
	\end{equation}
	
	It is obviously the update rule in \eqref{equ:update_linear}. Next steps are just to verify the assumptions in \cite{juditsky2011solving}
	
	We consider the distance generating function $d_{\theta}=\frac{1}{2}\|\theta\|_2^2$, $d_{w}=\frac{1}{2}\|w\|_2^2$. Then the distance generating function in $\Theta\times W$ is $ \frac{d_{\theta}}{2D^2_\theta}+\frac{w_\theta}{2D^2_{\theta}}.$

	Next step is to bound second momentum of the stochastic gradient.
	
	$$ \mathbb{E}\|-\hat{A}_t^Tw_t\|_2^2\leq \mathbb{E} \|\hat{A}_t^Tw_t-A^Tw_t\|_2^2+\|A^Tw_t\|_2^2\leq \sigma_1^2+\|A^T w_t\|_2^2\leq \sigma_1^2+\|A\|_2^2R^2  $$ 
	
	Similarly we have

	$$ \mathbb{E}\|\hat{A}_t\theta_t+\hat{C}_tw_t\|_2^2\leq \sigma_2^2 +(\|A\|_2+\sigma_{\max} (C))^2R^2 $$
	
	We denote $M^2_{*,\theta}=\sigma_1^2+\|A\|_2^2R^2$ and $M^2_{*,w}=\sigma_2^2 +(\|A\|_2+\sigma_{\max} (C))R^2$.
	
	$$ M^2_{*}=2D_\theta^2M^2_{*,\theta}+2D^2_wM^2_{*,w}=2R^2(M_{*,\theta}+M^2_{*,w})\leq 2R^2(\sigma_1^2+\sigma_2^2+3\|A\|^2_2R^2+2\sigma_{\max}^2(C)R^2). $$
	
	According to the proposition 3.2 in \cite{juditsky2011solving}, we can choose $a_t=\frac{2c}{M_{*} \sqrt{5n}}$ where $n$ is the number of training samples then with probability of at least $1-\delta$, we have
	
	$$Err(\bar{\theta}_n,\bar{y}_n)\leq \sqrt{\frac{10}{n}} (8+2\log (2/\delta))R\sqrt{ \sigma_1^2+\sigma_2^2+3\|A\|_2^2 R^2+2\sigma_{\max}(C) R^2}  $$

\end{proof}

\section{Proof of proposition \ref{proposition}. }

The (square root of) Cram{\'e}r distance has following properties \citep{bellemare2017cramer}. In the following we abuse the notation $\ell_2(P_{z_1}, P_{z_2})$ as $\ell_2(Z_1,Z_2)$.

\begin{itemize}
	\item A random variable $A$ is independent of $X$ and $Y$, then 
	$\ell_2 (X+A,Y+A)\leq \ell_2 (X,Y)$
	\item  Given a positive constant $c$, $\ell_2^2 (cX,cY)=c\ell_2^{2} (X,Y)$. Thus $\ell_2(cX,cY)\leq \sqrt{c} \ell_2(X,Y)  $
\end{itemize}

Recall $\bar{\ell}_2 (Z_1,Z_2)=\sup_{s,a} \ell_2(Z_1(s,a),Z_2(s,a))$ then $\bar{\ell}_2 (Z_1,Z_2)$ is a metric. It is easy to verify four requirements of the metric using the fact that $\ell_2$ is a metric \citep{bellemare2017cramer}. Here we just verify the triangle inequality
\begin{equation}
\begin{split}
\bar{\ell}_2 (Z_1,Z_2)=&\sup_{s,a} \ell_2(Z_1(s,a),Z_2(s,a))\\
\leq&\sup_{s,a}\big( \ell_2(Z_1(s,a), Y(s,a))+ \ell_2(Y(s,a), Z_2(s,a))\big) \\
\leq &  \sup_{s,a} \ell_2(Z_1(s,a), Y(s,a)) +  \sup_{s,a} \ell_2(Y(s,a), Z_2(s,a))\\
\leq & \bar{\ell}_2(Z_1,Y) +\bar{\ell}_2(Y,Z_2).
\end{split}
\end{equation}
Then we prove the $\sqrt{\gamma}$-contraction, for any $(s,a)$
\begin{equation}
\begin{split}
\ell_2(\mathcal{T} Z_1(s,a), \mathcal{T} Z_2 (s,a)) &= \ell_2 \big( R(s,a)+ \gamma P Z_1(s,a), R(s,a)+ \gamma P Z_2(s,a) \big)\\
& \leq \ell_2 \big(  \gamma  P Z_1(s,a),   \gamma P Z_2(s,a)     \big)\\
&\leq \sqrt{\gamma} \ell_2 \big(  P Z_1(s,a), P Z_2(s,a)  \big)\\
& \leq \sqrt{\gamma} \sup_{s',a'} \ell_2(Z_1(s',a'), Z_2(s',a') )\\
&\leq \sqrt{\gamma} \bar{\ell}_2 (Z_1,Z_2),
\end{split}
\end{equation}
where the first inequality uses the first property of Cram{\'e}r distance, the second inequality uses the second property, the third one holds from the definition of $P Z(s,a)$.

\section{Update rule of $\nabla^2 F_{\theta_{t}} (x,z_k) w$ in neural network}
Here we use a neural network with one hidden layer to illustrate the update rule, the extension to the general multilayer neural network is straight forward. The input of the neural network is state action pair $x$ whose $i^{th}$ element is $x_i$. The output is the softmax function $p_k:=p(x,z_k)$ which corresponds to the mass of the probability  at atom $z_k$. We can calculate $\nabla^2 p_k w$, and sum them to get $\nabla^2 F_\theta(x,z_k)w$.

Notice $w^T\nabla^2 p_k= w^T\nabla (\nabla p_k)$, thus we just need to act the operator $R:=w^T\nabla $ on the original forward and back propagation algorithm. 

The one hidden layer neural network has structure
$$a_j=\sum_j v_{ji} x_i, z_j=h(a_j), y_k=\sum_j v_{kj} z_j, p_m=\frac{\exp(y_m)}{\sum_m  \exp(y_m) },$$
where $v$ is the weight, h is the activation function.
Then we apply the operator $R$ on above equation and obtain
$$ R(a_j)=\sum_i w_{ji} x_i, R(z_j)=h'(a_j) R(a_j), R(y_k)=\sum_j v_{kj}R(z_j)+\sum_j w_{kj}z_j $$
$$ R(p_m)=\frac{\exp(y_m) R(y_m)}{\sum_m\exp(y_m)}-\frac{\exp y_m}{(\sum \exp(y_m))^2}\sum_m \exp(y_m)R(y_m).$$

The back propagation error of the neural network are

$$\delta_{mk}=\frac{\partial p_m}{\partial y_k}, \delta_{mj}=h'(a_j)  \sum_k v_{kj}\delta_{mk}.$$

Act the operator $R$ on them we have,

$$ R(\delta_{mk})=R(p_k)-2p_kR(p_k)~\text{when}~m=k, R(\delta_{mk})=-R(p_m)p_k-p_mR(p_k)~\text{when}~m\neq k. $$

$$ R(\delta_{mj})=h''(a_j)R(a_j)\sum_k v_{kj}\delta_{mk}+ h'(a_j)\sum_k w_{kj} \delta_{mk}+h'(a_j)\sum_k v_{kj}R(\delta_{mk}).$$

At last we have 
$$\frac{\partial p_m}{\partial v_{kj}}=\delta_{mk}z_j, \frac{\partial p_m}{\partial v_{ji}}=\delta_{mj} x_i.$$
Apply $R$ on them, we have

$$ R(\frac{\partial p_m}{\partial v_{kj}})=R(\delta_{mk})z_j+\delta_{mk}R(z_j), R(\frac{\partial p_m}{\partial v_{ji}})=R(\delta_{mj})x_i,$$
which are  $w^T\nabla^2 p_m.$

The computation complexity of whole process is in the same order of back propagation.

\end{document}